\newtheorem{definition}{Definition}
\newtheorem{assumption}{\textbf{Assumption}}[section]
\newtheorem{proposition}{\textbf{Proposition}}[section]
\newtheorem{proof}{\textbf{Proof}}[section]
\newtheorem{corollary}{\textbf{Corollary}}[section]
\newtheorem{theorem}{\textbf{Theorem}}[section]
\newcommand\copyrighttext{%
\scriptsize
© 2025 IEEE. Personal use of this material is permitted. Permission from IEEE must be obtained for all other uses, in any current or future  media, including reprinting/republishing this material for advertising or promotional purposes, creating new collective works, for resale or redistribution to servers or lists, or reuse of any copyrighted component of this work in other works. This is the author’s accepted version of the paper: T. Salazar, J. Gama, H. Ara\'ujo and P. H. Abreu, ``Unveiling Group-Specific Distributed Concept Drift: A Fairness Imperative in Federated Learning'', \textit{IEEE Transactions on Neural Networks and Learning Systems}, early access, September 2025. DOI: \href{https://doi.org/10.1109/TNNLS.2025.3601834}{10.1109/TNNLS.2025.3601834}%
}
\newcommand\copyrightnotice{%
\begin{tikzpicture}[remember picture,overlay]
\node[anchor=south,yshift=1pt] at (current page.south) {\fbox{\parbox{\dimexpr\textwidth-\fboxsep-\fboxrule\relax}{\copyrighttext}}};
\end{tikzpicture}%
}
\begin{document}

%
\title{Unveiling Group-Specific Distributed Concept Drift: A Fairness Imperative in Federated Learning}
%
%
%

\author{Teresa~Salazar,
        Jo\~ao~Gama,
        Helder~Ara\'ujo,
        and Pedro~Henriques~Abreu
\thanks{ 
This work is financed through national funds by FCT - Fundação para a Ciência e a Tecnologia, I.P., in the framework of the Project UIDB/00326/2025 and UIDP/00326/2025. This work was supported in part by the Portuguese Foundation for Science and Technology (FCT) Research Grant 2021.05763.BD. This work was partially supported by the Portuguese Recovery and Resilience Plan (PRR) through project C645008882-00000055, Center for Responsible AI.}%
\thanks{T. Salazar and P. H. Abreu are with CISUC/LASI - Centre for Informatics and Systems of the University of Coimbra, Department of Informatics Engineering, University of Coimbra, 3030-290 Coimbra, Portugal.
E-mail: tmsalazar@dei.uc.pt}%
\thanks{J. Gama is with INESC TEC and the Faculty of Economy, University of Porto, 4200-465 Porto, Portugal.}%
\thanks{H. Ara\'ujo is with the Institute of Systems and Robotics, Department of Electrical and Computer Engineering, University of Coimbra, 3030-290 Coimbra, Portugal.}}


%
%

\markboth{IEEE TRANSACTIONS ON NEURAL NETWORKS AND LEARNING SYSTEMS}%
{T. Salazar \MakeLowercase{\textit{et al.}}: Bare Demo of IEEEtran.cls for IEEE Journals}
%



\maketitle

\copyrightnotice

\begin{abstract}
In the evolving field of machine learning, ensuring group fairness has become a critical concern, prompting the development of algorithms designed to mitigate bias in decision-making processes. Group fairness refers to the principle that a model's decisions should be equitable across different groups defined by sensitive attributes such as gender or race, ensuring that individuals from privileged groups and unprivileged groups are treated fairly and receive similar outcomes. However, achieving fairness in the presence of group-specific concept drift remains an unexplored frontier, and our research represents pioneering efforts in this regard. Group-specific concept drift refers to situations where one group experiences concept drift over time while another does not, leading to a decrease in fairness even if accuracy remains fairly stable. Within the framework of Federated Learning, where clients collaboratively train models, its distributed nature further amplifies these challenges since each client can experience group-specific concept drift independently while still sharing the same underlying concept, creating a complex and dynamic environment for maintaining fairness. The most significant contribution of our research is the formalization and introduction of the problem of group-specific concept drift and its distributed counterpart, shedding light on its critical importance in the field of fairness. Additionally, leveraging insights from prior research, we adapt an existing distributed concept drift adaptation algorithm to tackle group-specific distributed concept drift which uses a multi-model approach, a local group-specific drift detection mechanism, and continuous clustering of models over time. The findings from our experiments highlight the importance of addressing group-specific concept drift and its distributed counterpart to advance fairness in machine learning.
\end{abstract}

\begin{IEEEkeywords}
Fairness, Concept Drift, Federated Learning.
\end{IEEEkeywords}

%
\IEEEpeerreviewmaketitle

\section{Introduction}
\IEEEPARstart{F}{airness} is a fundamental concern in machine learning, aiming to prevent discrimination in algorithmic decision-making processes with respect to sensitive attributes such as race or gender \cite{survey_fairness_ml, mehrabi2021survey, cruz2025guidelines}. Ensuring group fairness means that models should provide equitable outcomes across different groups defined by these sensitive attributes. For instance, the COMPAS (Correctional Offender Management Profiling for Alternative Sanctions) tool \cite{compas}, used in criminal justice for assessing recidivism risk, has been criticized for exhibiting racial bias. Specifically, it was found to disproportionately label Black defendants as high risk compared to white defendants, despite having similar risk profiles. This case underscores the critical importance of addressing fairness in machine learning systems and highlights the potential consequences of biased algorithms.

However, achieving fairness becomes even more challenging in the presence of concept drift, which refers to the phenomenon where the underlying data distribution changes over time. Concept drift generally impacts overall model accuracy and performance, as models may struggle to adapt to new patterns in the data. Specifically, group-specific concept drift, a concept we introduce and formalize in this paper, occurs when the data distribution for a specific group, characterized by a sensitive attribute, changes over time, while the distribution for other groups remains constant. This discrepancy can have significant implications for fairness, as it directly affects equitable outcomes across different groups.

To illustrate the importance of the problem, consider an example of group-specific concept drift in the context of loan lending. Suppose a financial institution operates in a region where historically marginalized communities have limited access to financial services. Over time, societal changes, government initiatives, and increased awareness promote financial inclusion, leading to an increase in loan applications from these previously underserved communities. In this evolving environment, group-specific concept drift occurs as the data distribution for loan applicants from marginalized communities changes. Traditional lending models may not adequately capture the creditworthiness and financial profiles of these communities due to the historical underrepresentation and lack of relevant data. If the model fails to adapt to these changes, its predictions for these communities may become less accurate, even though the overall accuracy might remain stable. This results in reduced fairness, as the model’s outcomes become inequitable across different groups. Specifically, the decrease in accuracy for the affected group leads to a decline in group fairness. Consequently, there is a risk of perpetuating bias in loan approval decisions. Addressing group-specific concept drift is relevant to ensure fair access to loans for individuals from historically marginalized communities. 

Federated Learning (FL) offers a distinctive environment for studying fairness in the context of group-specific concept drift within a distributed framework. FL is a collaborative machine learning approach where multiple clients train a shared model under the coordination of a central server, while keeping the training data decentralized \cite{fl-definition}. The continuous operation of FL systems over extended periods makes them susceptible to gradual changes such as concept drift. This operational framework is well-suited for examining fairness in the presence of group-specific distributed concept drift, as the distributed nature of FL introduces novel challenges. Specifically, group-specific concept drift may manifest in different clients at different times, complicating efforts to maintain fairness and accuracy across the entire system. As such, in this work, we also address these challenges by introducing and formalizing the concept of group-specific distributed concept drift within distributed environments.

Going back to the previous example of loan lending, now in a FL setting, consider a scenario where clients representing different geographical regions experience varying rates of change in the data distribution of loan applicants. For instance, a financial institution operating in a diverse geographic landscape may have multiple branches, each serving a unique community with its distinct characteristics. In this FL framework, if one branch undergoes changes in the data distribution for loan applicants from marginalized communities, while another branch remains unaffected, the result is a manifestation of group-specific distributed concept drift. The decentralized nature of FL implies that these drifts in data distribution occur independently across different clients at various times, introducing a layer of complexity in maintaining fairness and accuracy. As the financial landscape evolves and societal changes unfold, addressing group-specific distributed concept drift becomes imperative to uphold fairness and prevent perpetuation of biases in loan approval decisions for diverse and historically underserved communities.

\paragraph*{\textbf{Contributions}} Our contributions span three key dimensions, each enhancing our understanding and addressing challenges related to group-specific concept drift in FL:
{\setlength{\parindent}{0cm}
\paragraph*{(1) Formalization of group-specific concept drift and its distributed counterpart}
We pioneer the introduction and formalization of the concept of group-specific concept drift and group-specific distributed concept drift, shedding light on previously unexplored aspects within the fairness landscape.
\paragraph*{(2) Experimental framework for studying group-specific distributed concept drift}
We establish an experimental framework to study this new challenge, leveraging diverse datasets with varying degrees of sensitive group imbalance. This robust setup provides a foundation for exploring the intricacies of group-specific distributed concept drift.
\paragraph*{(3) Algorithm for handling group-specific distributed concept drift}
Building upon techniques from prior studies, we propose FairFedDrift: a continuous multi-model clustering approach with a local group-specific drift detection mechanism to effectively address the intricate problem of group-specific distributed concept drift. Our experimental results demonstrate the efficacy of FairFedDrift in effectively detecting and managing group-specific distributed concept drift in FL.
}

The remainder of this work is structured as follows: Section \ref{sec:related-work} provides some background and related work on the subject, Section \ref{sec:problem-statement} describes the problem statement, Section \ref{sec:fairfeddrift} presents the proposed approach, Section \ref{sec:methodology} describes the methodology and experimental design, Section \ref{sec:experiments} presents the results, Section \ref{sec:discussion} presents the discussions and possible future research, and Section \ref{sec:conclusions} concludes our work.

\section{Related Work}\label{sec:related-work}

\subsection{Group Fairness} 
Fairness-aware machine learning has gained significant attention, and algorithms designed to promote fairness can be categorized into three groups based on the stage at which they are implemented: pre-processing, in-processing, and post-processing \cite{survey_fairness_ml, mehrabi2021survey, fawos, tian2025multifair}. The algorithm proposed in this study belongs to the in-processing category. Fairness metrics can be broadly classified into two main groups: group fairness and individual fairness. Group fairness emphasizes that the outcomes of an algorithm do not disproportionately favor or harm any particular group based on sensitive attributes such as race, gender, ethnicity, religion, or socioeconomic status. On the other hand, individual fairness aims to ensure similar predictions for individuals who share similar characteristics \cite{survey_fairness_ml, fawos}. This research specifically focuses on optimizing group fairness.

\subsection{Fair Federated Learning}
Ensuring fairness in FL is important due to the diverse and heterogeneous nature of both the data and the participants. Several types of fairness have been identified in FL \cite{salazar2024survey}, including group fairness \cite{fair-fate, parecido-group-fairness-fl, parecido-fairl-fl-het-face, parecido-mehrabi, mitigating-bias-fl}, which ensures equitable treatment across sensitive groups; performance fairness \cite{chaudhury2024fair, wang2025fedeba}, which aims for uniform model performance across clients; selection fairness \cite{cho2022towards}, which provides clients an equal chance to participate; and contribution fairness \cite{wang2024fedsac}, which allocates rewards based on each client’s contribution to the global model. Group fairness in FL, which is the focus of this work, aims to mitigate bias while preserving privacy in a decentralized environment using techniques such as fair global model aggregation \cite{fair-fate, parecido-group-fairness-fl, parecido-fairl-fl-het-face, parecido-mehrabi} and local debiasing methodologies at each client \cite{mitigating-bias-fl}. Readers can refer to \cite{salazar2024survey} for a comprehensive survey on this topic.

Current fair FL approaches, designed for static FL, lack mechanisms to detect and adapt to changing data distributions, rendering them unfit for addressing fairness under group-specific distributed concept drift. In such scenarios, characteristics or patterns observed in one concept may not be applicable to others, intensifying fairness challenges in FL environments. Hence, using a multi-model approach offers a promising solution to address the complexities of concept drift while preserving fairness across diverse data distributions.

\subsection{Federated Learning under Concept Drift} 
Concept drift has been extensively studied in the centralized setting, and readers can refer to the surveys \cite{gama2014survey, wang2018systematic} for further details. However, applying centralized algorithms to FL is often unsuitable for handling distributed concept drift, as FL must contend with heterogeneous data distributions across time and clients, all while preserving data privacy. The decentralized nature of FL, where sensitive data remains local and only model updates are shared, poses unique challenges that traditional centralized methods cannot adequately address \cite{jothimurugesan2023federated, chen2021asynchronous, canonaco2021adaptive, casado2022concept, kang2024fednn, guo2024fedrc}. Consequently, most existing algorithms do not fully account for the distributed and privacy-preserving requirements of FL when addressing concept drift.

In the context of FL, research on concept drift is still in its early stages and most works do not account for the distributed nature of concept drift in their algorithms \cite{jothimurugesan2023federated}, failing to address heterogeneity across time and clients simultaneously \cite{chen2021asynchronous, canonaco2021adaptive, casado2022concept, kang2024fednn, guo2024fedrc}. In their recent work, Jothimurugesan et al. \cite{jothimurugesan2023federated} introduce a novel approach to tackle the challenge of distributed concept drift in FL. They treat drift adaptation as a dynamic clustering problem, emphasizing the limitations of single global models when dealing with distributed drifts. To address this issue, the authors propose a multi-model approach that leverages local drift detection based on the overall loss, offering a more robust solution to adapt to evolving data distributions over time. However, it is important to note that this algorithm, referred to as FedDrift, focuses on the global loss and does not consider group-specific losses. In the context of our work, where we specifically address group-specific distributed concept drift, FedDrift's reliance on global loss renders it ill-suited for our scenario. Group-specific concept drift may not be effectively detected by FedDrift, as it does not account for group-specific loss variations in local models that are crucial for capturing changes specific to certain groups. This limitation underscores the need for a more nuanced approach to address fairness concerns under group-specific distributed concept drift, which we elaborate on in the subsequent sections.

\subsection{Limitations and Research Context} While certain studies have delved into fairness under concept drift in centralized streaming environments \cite{iosifidis2019fairness, iosifidis2020online, zhang2019faht, zhang2021farf}, it is important to recognize their limitations when applied to fairness under concept drift in FL. These fairness-aware algorithms may not be well-suited for FL due to several reasons. Firstly, they typically require direct access to sensitive attributes, which is restricted in FL where only model parameters are shared among clients and the data of each client is kept private. Secondly, these algorithms rely on decision trees which may pose a security risk in a FL setting when sharing decision tree models with the server, requiring enhanced privacy and security measures. In addition, these studies primarily operate in a streaming context, with one example arriving at a time, which differs from our setup that is characterized by the simultaneous processing of multiple examples, and contrasts with the typical use of neural networks in FL. Similarly, centralized methods in multistream classification \cite{chandra2016adaptive, haque2017fusion, yu2024online} predict class labels for the target stream using labels from a source stream, where concept drift can occur asynchronously. However, these methods are both fairness-unaware and designed for streaming environments, making them ill-suited for this scenario for the same reasons.

Finally and most importantly, these existing works primarily concentrate on the study of concept drift affecting both groups simultaneously, rather than delving into datasets characterized by group-specific concept drift or its distributed counterpart. This limitation constrains the assessment of their approaches under realistic conditions with evolving fairness dynamics, where different groups may undergo distinct patterns of concept drift. In this work, we endeavor to bridge this gap by pioneering the formalization and exploration of group-specific concept drift in the context of FL, unveiling its importance in advancing fairness in machine learning.

\section{Problem Statement}\label{sec:problem-statement}

We first define the notations used throughout this paper. We assume there exist $K$ clients and $T$ timesteps in the FL setting. In our context, a timestep refers to a specific interval during which new data is received by a client, which is composed of $R$ communication rounds. The collection of private data received by a client $k \in K$ at timestep $t \in T$ is represented as:
\[
D_k^t = \{(x_i, y_i) \mid x_i \in X, y_i \in Y, i = 1, \dots, |D_k^t|\}
\]
Each dataset $D_k^t$ contains $|D_k^t|$ instances. We assume a classification setting, where $X$ is the input space, $Y$ is the output space, $\hat{Y}$ denotes the predicted class, and $S$ represents a sensitive attribute of $X$. The sensitive attribute $S$ takes values $s \in S$, where $S$ is the set of possible values corresponding to different groups.

\subsection{Classical Federated Learning}

The primary goal of FL is to train a central model, $\theta$, located on the server, while preserving the privacy of the clients' data. In FL, multiple clients collaborate to train a model that minimizes the weighted average of the loss across all clients. The objective of this framework can be expressed as follows, as originally proposed in \cite{Federated_Learning_Avg}:

\begin{equation}
\min_{\theta} f(\theta) = \sum_{k \in K} \frac{|D_{k}|}{|D|} G_k(\theta), \qquad G_k(\theta) = \frac{1}{|D_k|} \sum_{i=1}^{n_k} f_{i}(\theta),
\label{eq:min}
\end{equation}
where $G_k(\theta)$ represents the local objective for client $k$, and $f_{i}(\theta)$ represents the loss of data point $i$ from client $k$. It is important to note that this classical formulation does not explicitly account for the temporal variance of data across different timesteps.

This original work \cite{Federated_Learning_Avg} led to the proposal of an algorithm called Federated Averaging (FedAvg) which involves periodically averaging the locally trained models of clients at each communication round to generate the global model. The FedAvg global model at timestep $t$ can be mathematically expressed as:

\begin{equation}
\theta^{t} = \sum_{k \in K} \frac{|D_k|}{|D|} \theta^{t}_k
\end{equation}

This equation represents the aggregation of the locally updated models weighted by the ratio of their respective dataset sizes.

\begin{figure*}[h!]
    \centering
    \includegraphics[width=\linewidth]{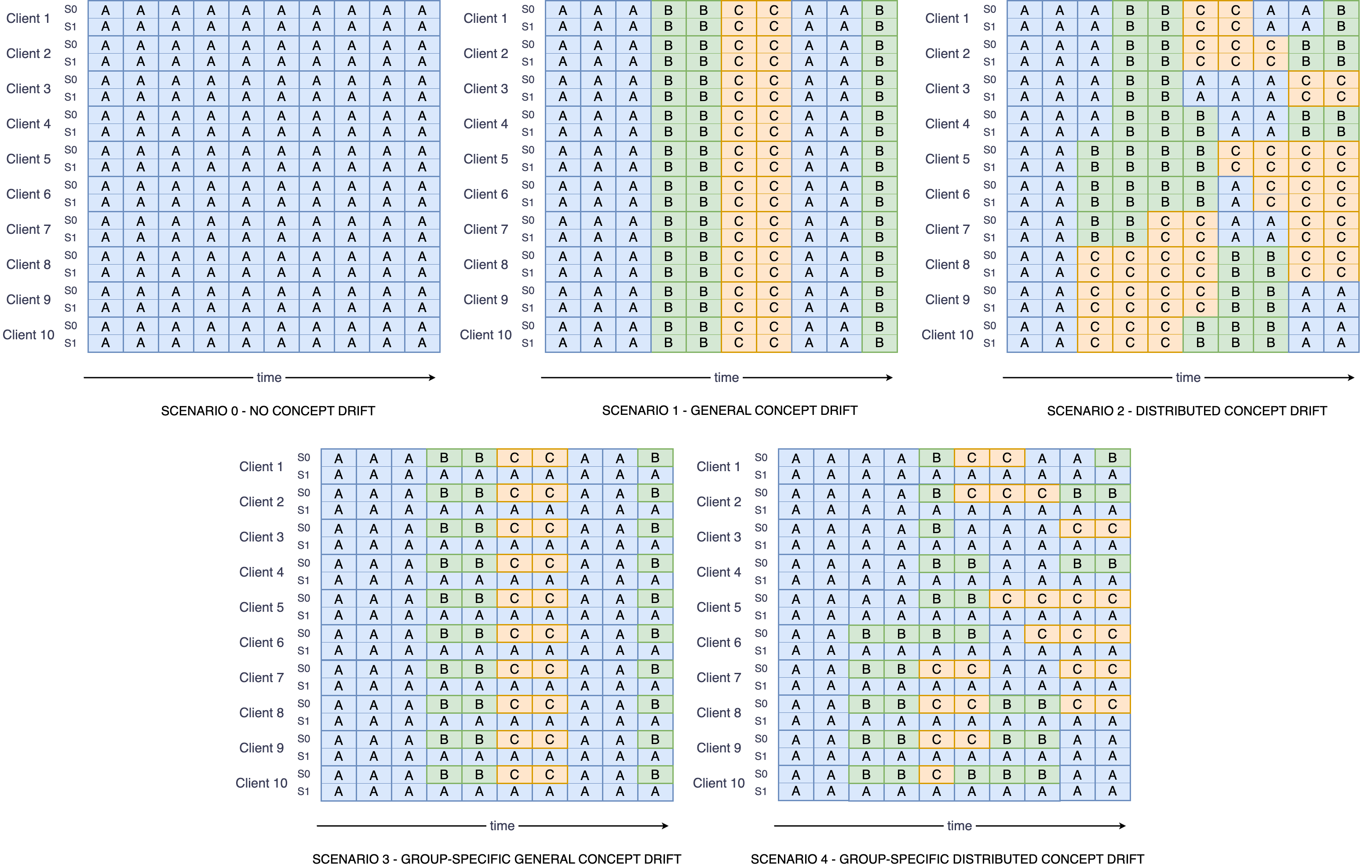}
    \caption{Scenarios of concept drift in Federated Learning: 0 - no concept drift; 1 - general concept drift; 2 - distributed concept drift; 3 - group-specific general concept drift; 4 - group-specific distributed concept drift.}
    \label{fig:concept-drift}
\end{figure*}

\subsection{Concept Drift}

We assume that the underlying distribution of each $D_k$ is not constant over time. This means that the characteristics of the data can vary across time, resulting in concept drifts. Concept drifts refer to changes in the joint distribution, where the distribution $P^t (X, y)$ is different from $P^{t-1} (X, y)$ at two different timesteps \cite{gama2014survey}. We are particularly interested in observing real concept drifts, which occur when the distribution $P^t (y|X)$ is different from $P^{t-1} (y|X)$ and require updating the model.

In this work, we introduce and formalize two distinct types of concept drifts: group-specific concept drift, and group-specific distributed concept drift.

\begin{definition}[\textbf{Group-Specific Concept Drift}]
Let $P^t(y|X, S)$ represent the conditional distribution of $y$ given $X$ and group $S$ at timestep $t$. We first define group-specific concept drift as follows:
\begin{itemize}
    \item For a specific group $S = s$, there is a concept drift if $P^t(y|X, S=s) \neq P^{t-1}(y|X, S=s)$ for any $t > 0$.
    \item For other groups $S \neq s$, there is no concept drift if $P^t(y|X, S \neq s) = P^{t-1}(y|X, S \neq s)$ for any $t > 0$.
\end{itemize}
\end{definition}

In other words, group-specific concept drift occurs when one group experiences a change in the conditional distribution of the target variable over time, while other groups' conditional distribution remain constant. This difference in concept drift between groups can result in a decrease in fairness, even if loss remains fairly stable. This occurs due to changes in loss within group $s$ attributed to concept drift that may not be readily noticeable in the overall loss, particularly when there is an imbalance in the dataset with respect to the sensitive attribute (i.e. when group $s$ is underrepresented).

In these situations, conventional drift detection mechanisms that rely solely on overall loss metrics may struggle to capture these underlying dynamics. This underscores the need to incorporate group-specific loss monitoring over time, as fairness concerns can arise when concept drifts affect opportunities for favorable outcomes across groups. It is important to note that, although studies have delved into fairness under concept drift in centralized environments, the concept of group-specific concept drift had not been formally addressed in the literature. In addition, empirical experiments to validate its impact are lacking as the literature focuses on general concept drift, overlooking the nuances introduced by group-specific variations.

\begin{definition}[\textbf{Group-Specific Distributed Concept Drift}]
Let $P^t_k(y|X, S)$ represent the conditional distribution of $y$ given $X$ for client $k$ at timestep $t$ and group $S$. In FL, we first define group-specific distributed concept drift as follows:
\begin{itemize}
    \item For each client $k$, there can be a group-specific concept drift i.e., $P^t_k(y|X, S=s) \neq P^{t-1}_k(y|X, S=s)$ and $P^t_k(y|X, S \neq s) = P^{t-1}_k(y|X, S \neq s)$ for any $t > 0$.
    \item The conditional distribution for client $k$ may differ from or be the same as the conditional distribution for client $k'$ at either the same timestep $t$ or a different timestep $t'$ i.e. $P^t_k(y|X, S) = P^{t'}_{k'}(y|X, S) \ \text{or} \ P^t_k(y|X, S) \neq P^{t'}_{k'}(y|X, S)$, where $(t=t' \ \text{or} \ t \neq t')$.
\end{itemize}
\end{definition}

In other words, group-specific distributed concept drift occurs when different clients in a FL setting experience either distinct or similar group-specific concept drifts that can happen at the same or different timesteps and clients. These temporal and spacial dynamics can lead to challenges in maintaining fairness and accuracy over time.

It is important to highlight that the exploration and comprehensive understanding of group-specific distributed concept drift in the context of FL has not been introduced and explored in the existing literature. The unique challenges posed by this phenomenon, including the temporal misalignment and spatial variations in group-specific concept drift across different clients and timesteps, add an additional layer of complexity to the already intricate landscape of fairness in FL. As such, our work contributes to filling this gap by formally introducing and addressing the intricacies of group-specific distributed concept drift, shedding light on its potential impact on fairness within FL systems.

Figure \ref{fig:concept-drift} illustrates five different concept drift scenarios that can arise in FL with three distinct concepts (`A', `B' and `C'). 
\begin{itemize}
    \item Scenario 0: no concept drift - the underlying data distribution remains stable over time across all clients. There is no concept drift experienced by any client, and the system operates under static conditions.
    \item Scenario 1: general concept drift - all clients experience a uniform concept drift in the data distribution over time. The drift is consistent, affecting all clients equally, and reflects a global change in the overall data environment.
    \item Scenario 2: distributed concept drift - each client experiences independent changes in their local data distribution over time. The drifts are uncoordinated, meaning different clients may encounter different concepts, but these drifts happen independently across clients.
    \item Scenario 3: group-specific general concept drift - groups within the clients experience concept drift, but this drift happens uniformly across all clients for that group. For example, all clients experience the same concept drift for a specific sensitive group, though the drift remains general across the federation.
    \item Scenario 4: group-specific distributed concept drift - groups within different clients experience concept drift at different times. The group-specific concept drifts vary not only over time but also across different clients. This scenario introduces a unique complexity, as the concept drift occurs in a distributed manner, affecting different groups within different clients at different times.
\end{itemize}

We focus on studying scenario 4 - group-specific distributed concept drift, as it presents a unique and unexplored aspect of concept drift in the FL setting. This scenario demands novel techniques that consider the granularity of group-specific variations across different clients, which cannot be effectively handled by conventional methods aimed at distributed concept drift. By addressing this gap, our work introduces a framework that ensures fairness by detecting and adapting to these group-specific concept drifts.

\subsection{Objective}

We present the optimization objective employed to detect and address group-specific distributed concept drift while upholding the privacy of participating clients' data within the federation. We adopt a multiple global model approach where we aim to discover clusters of clients, each representing a distinct concept, and train each global model accordingly. To achieve this, we incorporate local group-specific loss monitoring on each client. In other words, for each group $s \in S$ we calculate its loss on each global model $\theta_m$, where $m$ is the index of each model, in the set of available global models, $GM$, and assign each client to a specific global model, accordingly.

This idea is inspired by previous work \cite{jothimurugesan2023federated}, which adopts a multiple global model approach based on global loss monitoring. However, the use of global loss monitoring in their approach poses limitations when dealing with group-specific concept drift. This is because global loss may not vary significantly over time in the presence of group-specific concept drift, especially when the dataset is imbalanced with respect to a sensitive group. In this case, changes in loss predominantly occur within specific groups. Consequently, relying solely on global loss may overlook variations introduced by group-specific concept drift.

We employ the following optimization objective for each global model $\theta_m$ in the set of global models, $GM$, at each timestep $t$:

\begin{equation}
\begin{split}
\min_{\theta_m \in GM} f(\theta_m)^t = \frac{\sum_{k \in K} G_k(\theta_m) \sum_{t' \leq t: w_k^{t'}=m} |D_k^{t'}|}{\sum_{k \in K}\sum_{t' \leq t: w_k^{t'}=m} |D_k^{t'}|} \\
w_k^{t} = \arg\min_m \sum_{s \in S} \ell(\theta_m)^{D_k^{t}}_{s} \\
\text{s.t.} \quad \forall \theta_m \in GM : \forall k: w_k^{t}=m : \forall s \in S : \\ \ell(\theta_{m})^{D_k^{t}}_{s} \leq \ell(\theta_{w_k^{t-1}})^{D_k^{t-1}}_{s} + \delta_s
\end{split}
\label{eq:objective}
\end{equation}
where: 
\begin{itemize}
    \item $t \in T$ denotes the timestep;
    \item $k \in K$ represents a client within the federation;
    \item $s \in S$ refers to a sensitive group;
    \item $\theta_m \in GM$ refers to a specific global model with index $m$;
    \item $w_k^{t}$ denotes the index of the global model that client $k$ has identified with at timestep $t$;
    \item $t' \leq t: w_k^{t'}=m$ is the set of timestamps where client $k$ has identified with global model $\theta_{m}$;
    \item $f(\theta_m)^t$ is the weighted sum of local objectives $G_k(\theta_m)$ of clients that identify with global model $\theta_m$;
    \item $\ell(\theta_m)^{D_k^{t}}_{s}$ is the loss for group $s$ and client $k$ on model $\theta_m$ at timestep $t$;
    \item $\delta_s$ represents a predefined threshold for the acceptable loss difference of group $s$ between two consecutive timestamps.
\end{itemize}

Hence, at each timestep, each client identifies with the model that yields the lowest sum of group losses, as long as the loss difference of each group $s$ between the current and the previous timestamp does not surpass the threshold for that group, $\delta_s$. If there is no such model, then a new global model will be created and added to the set of global models $GM$. Further details will be explained in the next section.

Finally, our framework involves training multiple global models, each assigned to a cluster of clients. Within each round, the training of each global model proceeds via standard FedAvg \cite{Federated_Learning_Avg} over the clients assigned to it. Since the aggregation weights for clients in a given cluster are fixed during that timestep, the training dynamics are equivalent to those in classical FedAvg. Therefore, existing convergence analysis for FedAvg \cite{li2020federated} applies directly within each cluster at each timestep.

\section{FairFedDrift: Fair Federated Learning under Group-Specific Distributed Concept Drift}\label{sec:fairfeddrift}

Our work offers a solution to address the challenge of group-specific distributed concept drift building upon an existing algorithm, FedDrift, presented in \cite{jothimurugesan2023federated}. Algorithm \ref{algo:fair-fed-drift} presents the pseudo-code of FairFedDrift\footnote{Source code can be found at: \url{https://github.com/teresalazar13/FairFedDrift}.}.

\begin{algorithm}[h!]
\caption{FairFedDrift}
\label{algo:fair-fed-drift}
 \begin{algorithmic}[1]
    \STATE Initialize global models GM: $\left[ \theta_0 \right]$, number of timesteps: $T$, number of rounds: $R$, number of clients: $K$, client identities $w_{k \in K}^{t=0}=0$, local epochs: $E$, local batch size: $|B|$, local learning rate: $\eta$, loss threshold: $\delta_s$.
    \FOR{each timestep $t \in T$}
        \FOR{each client $k \in K$}
            \IF{$\nexists \theta_m \in GM: \forall s \in S, \ell(\theta_m)^{D_k^t}_{s} \leq \ell(\theta_{w_k^{t-1}})^{D_k^{t - 1}}_{s} + \delta_s$}
                \STATE Add new $\theta$ to GM at index $i$
                \STATE $w_k^t = i, \quad \ell(\theta_{w_k^t})^{D_k^t}_{s} \gets \min_{j \in GM} \ell(\theta_j)^{D_k^t}_{s}$
            \ELSE{
                \STATE
                    $w_k^t = \arg\min_m$ \newline
                    \hspace*{2em}
                    $\sum_{s \in S}
                    \begin{cases}
                        \ell(\theta_m)^{D_k^t}_{s} 
                        \\ \quad \quad \text{if } \ell(\theta_m)^{D_k^t}_{s} \leq \ell(\theta_{w_k^{t-1}})^{D_k^{t-1}}_{s} + \delta_s \\
                        \infty \quad \text{else}
                    \end{cases}$
            }
            \ENDIF
        \ENDFOR

        \FOR{each $i, j$ in parallel from 1 to $|GM|$}
            \STATE $L_{ij} - L_{ii} \leftarrow \max_{k: \exists t' < t: w_k^{t'}=j,  l: \exists t* < t: w_l^{t*}=i} $ \newline 
                \hspace*{2em} 
                $\sum_{s \in S}
                \begin{cases}
                    \ell(\theta_i)^{\cup_{t'}D_k^{t'}}_{s} - \ell(\theta_i)^{\cup_{t*}D_l^{t*}}_{s} 
                    \\ \quad \quad \text{if } \ell(\theta_i)^{\cup_{t'}D_k^{t'}}_{s} - \ell(\theta_i)^{\cup_{t*}D_l^{t*}}_{s} \leq \delta_s \\
                    \infty \quad \text{else} \\
                \end{cases}$
            \STATE Compute $L_{ji} - L_{jj}$ similarly
            \STATE $Z_{ij}$ = $\max (L_{ij} - L_{ii}, L_{ji} - L_{jj}, 0)$
        \ENDFOR

        \WHILE{$\min Z_{ij} \neq \infty$}
            \STATE Add new model to $GM$: $\theta_k = \frac{\theta_i W_i + \theta_j W_j}{W_i + W_j}$, \newline 
            \hspace*{2em} $W_{n \in i, j} = \sum_{k \in K}\sum_{t' < t: w_k^{t'}=n}|D_{k}^{t'}|$ 
            \STATE $Z_{kl} = \max(Z_{il}, Z_{jl}, 0)$ for all $l$
            \STATE $w_k^{t' \leq t}$ = $w_i^{t' \leq t}$ + $w_j^{t' \leq t}$
            \STATE Delete $\theta_i$ and $\theta_j$
        \ENDWHILE

        \FOR{each round $r \in R$}
            \FOR{each client $k \in K$}
                \FOR{each global model $\theta_m \in GM$}
                    \STATE $\theta_{m, k}$ = $\theta_m$
                    \FOR{each local epoch $e \in E$}
                        \FOR{batch $b \in B$ from $\cup_{t' \leq t:w_{k}^{t'}=m} D_k^{t'} $}
                            \STATE $\theta_{m, k}$ = $\theta_{m,k} - \eta \nabla \ell(\theta_m, b)$
                        \ENDFOR
                    \ENDFOR
                \ENDFOR
            \ENDFOR
            \FOR{each global model $\theta_m \in GM$}
                \STATE $\theta_m$ = $\frac{ \sum_{k \in K} \theta_{m, k} \sum_{t' \leq t: w_{k}^{t'}=m} |D_k^{t'}|}{\sum_{k \in K} \sum_{t' \leq t: w_{k}^{t'}=m} |D_k^{t'}| }  $ 
            \ENDFOR
        \ENDFOR
     \ENDFOR
 \end{algorithmic}
\end{algorithm}


\subsubsection{Multi-model approach} The core idea behind this algorithm is to recognize that a single global model is ill-suited for handling distributed group-specific concept drifts, leading to the adoption of a multi-model strategy. In the beginning of the federation, there is a single global model, $\theta_0$, and all clients are associated with it, $w_{k \in K}^{t=0}=0$. At each timestep, $t$, each client $k$ has new incoming data $D_k^t$ and associates itself with the model that yields the lowest sum of losses of every group $s \in S$, as long as the difference in loss of each group $s$ between the current and the previous timestep does not surpass the threshold for that group, $\delta_s$. In contrast to FairFedDrift, which is designed with fairness considerations, FedDrift is fairness-unaware and associates itself with the model that yields the lowest global loss, without accounting for group-specific losses.

\subsubsection{Group-specific distributed concept drift detection} A drift is identified if there is no global model in the set of global models where for any group $s \in S$ the difference in losses of two consecutive timestamps of that group is smaller than the threshold for that group, $\delta_s$ (line 4). In this case, a new global model is created. Contrary to FairFedDrift, in FedDrift a drift is identified if the difference between the lowest global loss (not group-specific) of two consecutive timestamps is greater than a threshold, $\delta$.

\subsubsection{Global model merging} Importantly, since new concepts might simultaneously emerge at different clients, a merge operation is employed to consolidate models that correspond to the same concept. This operation involves the creation of a distance matrix (lines 11-15), $Z$, where each cell is calculated for each pair of global models, $i$ and $j$. The distance $Z_{ij}$ corresponds to the maximum value of $L_{ij} - L_{ii}$, $L_{ji} - L_{jj}$, and 0, where $L_{ij} - L_{ii}$ represents the sum of the group loss differences of model $\theta_i$ when tested on data associated with model $j$ and $i$. For a model $j$, there can be multiple clients $k$ that have identified with it: $k: \exists t' < t: w_k^{t'}=j$. Here, $\cup_{t'}D_k^{t'}$ represents the data of client $k$ associated with model $j$. If the loss difference of any group $s$ exceeds the threshold for that group, $\delta_s$, the value $\infty$ is assigned to $Z_{ij}$. The merging process (lines 16-21) combines the two models with the lowest distances in the matrix (provided they do not reach $\infty$) and unifies cluster identities by averaging their models, with the weighting based on the size of each model's training dataset. The distance between clusters of multiple elements is measured as the maximum distance between their constituent parts. In contrast to FairFedDrift, FedDrift only focuses on the global loss for merging global models, without considering group-specific losses.

\subsubsection{Training and Model Averaging} The training and model averaging process (lines 22-36) involves multiple communication rounds, denoted by $R$. In each round, every global model $\theta_m$ in $GM$ is updated based on the contributions from clients associated with that model. The update is performed through iterative optimization, where each client $k$ trains its local model $\theta_{m, k}$ if it had been associated with model $m$ until that timestep. These local models are then aggregated to update the global model $\theta_m$ by considering the weighted sum of individual models, where the weights are proportional to the sizes of the respective client datasets.

To sum up, the choice of considering the loss of each group $s \in S$ in FairFedDrift, in contrast to FedDrift \cite{jothimurugesan2023federated}, stems from its effectiveness in handling group-specific concept drift as it inherently better suited to capture the nuances of fairness in these scenarios where specific groups may experience concept drift at different timesteps. This is attributed to the fact that, in instances of group-specific concept drift, a global loss metric may not manifest substantial variations. FairFedDrift addresses this gap through three core modifications: (1) during model selection, it uses the sum of group-specific losses rather than global loss to ensure that all groups are considered in the selected model; (2) during drift detection, it monitors each group's loss individually, flagging drift when any group-specific variation exceeds the threshold; and (3) in model merging, it imposes per-group thresholds to merge models together. These changes are essential for maintaining fairness over time in dynamic, heterogeneous environments. We provide a detailed comparison between FairFedDrift and baseline methods, and empirically validate its advantages in Section \ref{sec:experiments}.

\subsubsection{Window Size} Finally, instead of retaining the entire history of client identities $\cup_{t' \leq t} w_k^{t'}$ and local datasets $\cup_{t' \leq t} D_k^{t'}$ at each timestamp, clients could opt to maintain only a sliding window of the most recent timesteps \cite{jothimurugesan2023federated}. The window size parameter determines how much past information is used during model assignment and concept drift detection. Mathematically, this adjustment limits the history to a window of size $w$, encompassing timestamps in the range $\max(0, t - w) \leq t' \leq t$. Specifically, for each timestep $t$, the history of client identities $w_k^{t'}$ and local datasets $D_k^{t'}$ are restricted to: $\cup_{\max(0, t - w) \leq t' \leq t} w_k^{t'}$ and $\cup_{\max(0, t - w) \leq t' \leq t} D_k^{t'}$, respectively. This change reduces the computational burden while potentially affecting fairness and accuracy, as less historical data is considered.

\section{Methodology and Experimental Design}\label{sec:methodology}

In our experimental setup, we operate within a FL framework that operates continuously over time involving K clients. At each timestep, every client receives a new batch of training data. After completing training for a given timestep, which can span multiple communication rounds, for all clients, we test the global model associated with that client over the batch of data arriving at that client at the following timestep (between lines 3 and 4, in Algorithm \ref{algo:fair-fed-drift}).

\begin{figure*}
    \centering
    \includegraphics[width=\linewidth]{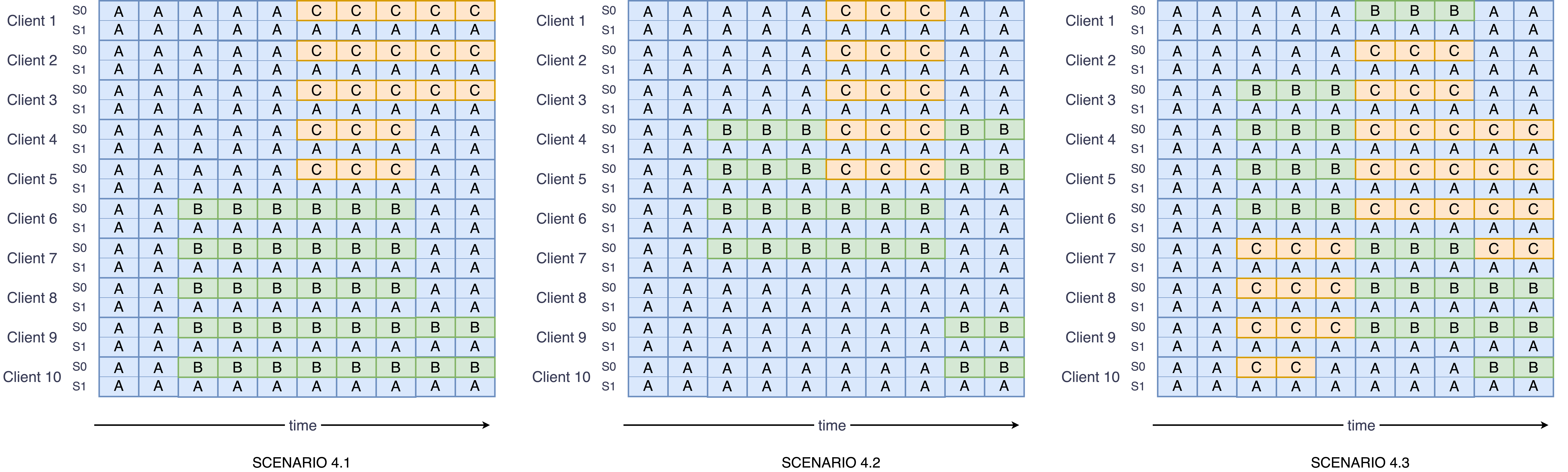}
    \caption{Test scenarios of group-specific distributed concept drift.}
    \label{fig:scenarios}
\end{figure*}

\subsection{Datasets and Group-Specific Distributed Concept Drift Simulation}

In our experimental setup, we leverage diverse datasets derived from established benchmarks in machine learning: MNIST \cite{mnist}, FEMNIST \cite{fashion-mnist}, CIFAR-10 \cite{cifar}, and Adult \cite{adult}. These datasets were chosen due to their extensive size, making them well-suited for this FL setup, where each client necessitates a substantial number of data instances across different timesteps. In addition, these datasets have also been employed in fairness studies \cite{fairness-studies, fairness-studies-2}, which makes them suitable to our experiments. To simulate this FL scenario, we partition each dataset into $K$ clients and $T$ timesteps and introduce group-specific concept drift within the datasets at specific timesteps with distinct concepts (`A', `B', `C', `D', `E').

\subsubsection{\texttt{MNIST-GDrift}} The MNIST dataset consists of gray-scale handwritten digit images, commonly used for digit recognition tasks \cite{mnist}. We introduce one sensitive attribute, $S$, with two groups ($S=1$ and $S=0$), representing distinct image characteristics. For $S=0$ images, we invert the background and digit colors and rotate 90 degrees in comparison to standard MNIST images ($S=1$). To simulate four distinctive group-specific concepts (`B', `C', `D', `E') from the original distribution (`A'), we label-swap images for $S=0$: for drift `B', we swap labels `1' and `2'; for drift `C', we swap labels `2' and `3'; for drift `D', we swap labels `3' and `4'; for drift `E', we swap labels `4' and `5'.

\subsubsection{\texttt{FEMNIST-GDrift}} The FEMNIST is a dataset featuring gray-scale images of 10 types of clothing items, designed as an alternative to MNIST for benchmarking image classification algorithms \cite{fashion-mnist}. To create different groups and simulate group-specific concept drift, we use the same strategies as the \texttt{MNIST-GDrift} dataset.

\subsubsection{\texttt{CIFAR-10-GDrift}} The CIFAR-10 dataset is an established dataset used for object recognition. It is a subset of the 80 million tiny images dataset and consists of color images containing one of 10 object classes. We introduce one sensitive attribute, $S$, with two groups ($S=1$ and $S=0$). Images where $S=0$ correspond to classes 0-4, while images with $S=1$ correspond to classes 5-9. To simulate four distinctive group-specific concepts (`B', `C', `D', `E') from the original distribution (`A'), we label-swap images for $S=0$: for drift `B', we swap labels `0' and `1'; for drift `C', we swap labels `1' and `2'; for drift `D', we swap labels `2' and `3'; for drift `E', we swap labels `3' and `4'.

\subsubsection{\texttt{Adult-GDrift}} We experiment on the Adult dataset \cite{adult} commonly used in fair machine learning research. The prediction task for the Adult dataset involves determining whether a person's income exceeds \$50K/yr. In this context, race serves as the sensitive attribute, where $S=1$ typically corresponds to White individuals, and $S=0$ represents non-White individuals. To introduce concept drift, we modify features of the samples where $S=0$ and change the outcome of each of 4 distinct groups created based on gender (`Male' and `Female') and occupation (`Professional' - `Adm-clerical', `Exec-managerial', `Prof-specialty', `Tech-support'; and `Technical' - 'Armed-Forces', `Craft-repair', `Machine-op-inspct', `Farming-fishing', `Handlers-cleaners', `Other-service', `Priv-house-serv', `Protective-serv', `Sales', `Transport-moving'). For concept `A', the dataset remains unchanged. For concept `B', the target class of non-White individuals who are Male and have `Professional' occupations is set to 1. For concept `C', the target class of non-White individuals who are Female and have `Professional' occupations is set to 1. For concept `D', the target class of non-White individuals who are Male and have `Technical' occupations is set to 1. For concept `E', the target class of non-White individuals who are Female and have `Technical' occupations is set to 1. 

\subsubsection{$\alpha$ parameter} The parameter $\alpha$ is an important element in our experimental design, determining the size of the unprivileged group relative to the privileged group. The size of the unprivileged group is a result of multiplying the size of the privileged group by $\alpha$. By adjusting the value of $\alpha$, we can control the balance between the groups in our datasets, allowing us to explore different scenarios where one group may be underrepresented or over-represented relative to the other. As such, when $\alpha$ is closer to zero, the relative size of the unprivileged group ($S=0$) is smaller. To generate a specific number of instances depending on the value of $\alpha$ for the \texttt{MNIST-GDrift} and \texttt{FEMNIST-GDrift} datasets, we adjust the number of transformed images accordingly. For the \texttt{CIFAR-10-GDrift} dataset, we reduce the number of images from the unprivileged group ($S=0$, classes 0-4) by sampling the dataset based on the specified $\alpha$ value, while keeping all images for the privileged group ($S=1$, classes 5-9). Finally, for the \texttt{Adult-GDrift} dataset, the process is more intricate as the original dataset is oversampled, whose default representation is equivalent to $\alpha=0.16$. If $\alpha > 0.16$, we generate unprivileged instances, and if $\alpha < 0.16$, we generate privileged instances. We employ the SMOTE \cite{smote} algorithm for creating these instances.

\subsubsection{Scenarios} We design five distinct scenarios of group-specific distributed concept drift with distinct concepts (`A', `B', `C', `D', `E'), as illustrated in Figure \ref{fig:scenarios}. These aim to provide a comprehensive exploration of the temporal and spatial dynamics that can impact fairness and accuracy in FL settings. More specifically, each scenario is constructed to vary in complexity based on three key factors:
\begin{itemize}
    \item Number of distinct concepts – The number of different concepts (i.e. ‘A’, ‘B’, ‘C’, ‘D’ and ‘E’) appearing in the scenario.
    \item Number of concept drifts – The number of times a concept drift occurs across all clients.
    \item Number of timesteps with drift – The number of timesteps where at least one client experiences a drift.
\end{itemize}

Below, we provide a more explicit distinction between the scenarios:
\begin{itemize}
    \item Scenario 4.1 – Involves three distinct concepts appearing sequentially, with 15 drift events across three timesteps.
    \item Scenario 4.2 – Also has three concepts, but more recurring drifts occur, leading to 18 drift events across the same three timesteps.
    \item Scenario 4.3 – Introduces simultaneous concept appearances, increasing complexity to 23 drift events across four timesteps.
    \item Scenario 4.4 – Expands to five distinct concepts, resulting in 30 drift events across six timesteps, with more diverse transitions.
    \item Scenario 4.5 – The most complex setting, featuring five concepts and a highly unpredictable pattern, with 38 drift events across seven timesteps.
\end{itemize}

By evaluating performance across this spectrum of drift scenarios, we ensure that our approach is tested under both controlled and highly dynamic drift conditions, reflecting the diverse challenges found in real-world FL applications.

\subsection{Implementation Details}

We employ a convolutional neural network (CNN) for the \texttt{MNIST-GDrift} and \texttt{FEMNIST-GDrift} datasets featuring a convolutional layer, a max-pooling layer, a fully connected layer, and a softmax activation function. For the \texttt{CIFAR-10-GDrift} dataset we use a ShuffleNet v2 \cite{ma2018shufflenet} model pre-trained on ImageNet, and extend it by adding a fully connected layer with 256 units, followed by ReLU, dropout, batch normalization, another fully connected layer, using softmax for multi-class classification. For the \texttt{Adult-GDrift} dataset we employ a fully connected feed-forward neural network with one hidden layer with 10 $tanh$ units, and a sigmoid output neuron (binary classification). As explained before, each test scenario is constructed to span $T = 10$ timesteps, with $K = 10$ clients participating. At each timestep, clients perform $R = 10$ local communication rounds using a learning rate of $\eta = 0.1$. We experiment with different values of $\delta_s$: $\{0.05, 0.1, 0.25, 0.5, 0.75, 1.0, 1.25, 1.5, 1.75, 2.0\}$. Furthermore, for the \texttt{MNIST-GDrift} and \texttt{FEMNIST-GDrift} datasets we set $|B|=32$ and $E=5$. For the \texttt{CIFAR-10-GDrift} dataset we set $|B|=128$ and $E=30$. For the \texttt{Adult-GDrift} dataset we set $|B|=10$ and $E=10$.

\subsection{Metrics}

We evaluate the algorithms based on both accuracy (ACC) and fairness (AEQ, OEQ, OPP).

\begin{definition}[Accuracy Equality (AEQ)]
Accuracy Equality (AEQ) \cite{berk2021fairness} is a measure of fairness that calculates the ratio of accuracies across two groups ($S=0$ and $S=1$):
\begin{equation}
    \text{AEQ} = \frac{P [\hat{Y} = Y \mid S = 0]}{P [\hat{Y} = Y \mid S = 1]}
\end{equation}
\end{definition}

\begin{definition}[Overall Equality of Opportunity (OEQ)]
Overall Equality of Opportunity (OEQ) \cite{equality-of-opportunity} compares the true positive rates (TPRs) for two groups. Let $TPR_0 = P [\hat{Y} = c \mid S = 0, Y = c]$ and $TPR_1 = P [\hat{Y} = c \mid S = 1, Y = c]$ represent the TPRs for group $S=0$ and $S=1$ for the $c^{th}$ target class, respectively. The formula for OEQ is given by:

\begin{equation}
\text{OEQ} = 
\left\{
\begin{array}{ll} 
\frac{1}{C} \sum_{c=1}^{C} \frac{TPR_0}{TPR_1} & \text{If} \quad Y(S=0) = Y(S=1) \vspace{0.3cm} \\
\frac{ \frac{1}{C_0} \sum_{c=i}^{C_0} TPR_0}{\frac{1}{C_1} \sum_{c=j}^{C_1} TPR_1} & \text{If} \quad Y(S=0) \neq Y(S=1) \vspace{0.3cm}
\end{array}
\right.
\end{equation}
where $C$ is the number of target classes. In cases where the target classes for both groups are overlapping (i.e. $Y(S=0) = Y(S=1)$), the metric compares the TPRs across the groups for each class. However, if the classes are non-overlapping (i.e. $Y(S=0) \neq Y(S=1)$), as in the \texttt{CIFAR-10-GDrift} dataset, the metric evaluates the TPRs for each class separately within each group and then computes the ratio of the average TPRs across groups. Here, \( C_0 \) is the number of classes \( c \) where the count of instances with \( S=0 \) and \( Y = c \) is nonzero (where $TPR_0$ can be calculated), and \( C_1 \) is defined similarly for \( S=1 \).

\end{definition}

\begin{definition}[Overall Predictive Parity (OPP)]
Overall Predictive Parity (OPP) \cite{chouldechova2017fair} compares the positive predictive values (PPVs) for two groups. Let $PPV_0 = P [Y = c \mid S = 0, \hat{Y} = c]$ and $PPV_1 = P [Y = c \mid S = 1, \hat{Y} = c]$ represent the PPVs for group $S=0$ and $S=1$ for the $c^{th}$ target class, respectively.  The formula for OPP is given by:
\begin{equation}
\text{OPP} = 
\left\{
\begin{array}{ll} 
\frac{1}{C} \sum_{c=1}^{C} \frac{PPV_0}{PPV_1} & \text{If} \quad Y(S=0) = Y(S=1) \vspace{0.3cm} \\
\frac{ \frac{1}{C_0} \sum_{c=i}^{C_0} PPV_0}{\frac{1}{C_1} \sum_{c=j}^{C_1} PPV_1} & \text{If} \quad Y(S=0) \neq Y(S=1) \vspace{0.3cm}
\end{array}
\right.
\end{equation}
where $C$ is the number of target classes. In a similar way to the OEQ metric, \( C_0 \) is the number of classes \( c \) where the count of instances with \( S=0 \) and \( \hat{Y} = c \) is nonzero (where $PPV_0$ can be calculated), and \( C_1 \) is defined similarly for \( S=1 \).
\end{definition}

In these metrics (AEQ, OEQ, OPP), a value of 1 indicates ideal fairness, and the numerator corresponds to the group with the lowest value to account for cases of reverse discrimination.

\subsection{Baselines}

FairFedDrift is compared against three pivotal algorithms: FedAvg, FedDrift \cite{jothimurugesan2023federated} and Oracle. The Oracle algorithm acts as an idealized benchmark, assuming access to concept IDs during training and using multiple-model training based on ground-truth clustering. While unrealistic in real-world scenarios, where concept IDs are unknown, Oracle provides an upper-bound performance reference to measure the effectiveness of practical approaches \cite{jothimurugesan2023federated}. Given the unique problem of group-specific distributed concept drift, traditional baselines for fair FL are absent, as explained in Section \ref{sec:related-work}. We found that these achieve very similar results to FedAvg on these datasets under group-specific distributed concept drift, as they lack the adaptability to effectively address the challenges posed by these evolving data distributions. Nevertheless, Oracle serves as a perfect algorithm for this problem setup and should act as an ideal baseline.

\section{Experimental Results}\label{sec:experiments}

In this section, we unveil the outcomes of our experiments, addressing a distinctive challenge in our investigation - that of group-specific distributed concept drift. It is essential to underscore that our contribution extends beyond the mere presentation of algorithms' results; we introduce and explore this novel problem, shedding light on the intricate challenges it presents.

\subsection{Results Across Multiple Datasets and Scenarios}

\begin{table}[h!]
\centering
\renewcommand{\arraystretch}{1.2}
\setlength{\tabcolsep}{4pt}
\begin{tabular}{cccccc:c}
    \toprule
    \multirow{2}{*}{Dataset} & \multirow{2}{*}{$\alpha$} & \multirow{2}{*}{Metric} & \multicolumn{4}{c}{Algorithm} \\
    \cline{4-7}
    & & & FedAvg & FedDrift & FairFedDrift & Oracle \\ 
    \midrule
    \multirow{8}{*}{\rotatebox[origin=c]{90}{\parbox{2cm}{\centering \texttt{MNIST}\\ \texttt{GDrift}}}}
    & \multirow{4}{*}{0.1} 
      & AEQ & 0.83+-0.09 & 0.84+-0.09 & \textbf{0.91+-0.08} & 0.93+-0.05 \\ 
    & & OEQ & 0.80+-0.07 & 0.81+-0.08 & \textbf{0.87+-0.09} & 0.89+-0.05 \\ 
    & & OPP & 0.79+-0.08 & 0.81+-0.08 & \textbf{0.87+-0.08} & 0.89+-0.05 \\  
    & & ACC & 0.95+-0.01 & \textbf{0.96+-0.02} & 0.94+-0.04 & 0.95+-0.03 \\
    \cline{2-7}
    & \multirow{4}{*}{0.05} 
      & AEQ & 0.80+-0.11 & 0.82+-0.10 & \textbf{0.85+-0.12} & 0.89+-0.09 \\ 
    & & OEQ & 0.76+-0.09 & 0.79+-0.09 & \textbf{0.80+-0.12} & 0.83+-0.09 \\ 
    & & OPP & 0.75+-0.09 & 0.79+-0.09 & \textbf{0.80+-0.12} & 0.83+-0.09 \\  
    & & ACC & 0.95+-0.02 & \textbf{0.96+-0.02} & 0.94+-0.03 & 0.95+-0.02 \\
    \midrule
    \multirow{8}{*}{\rotatebox[origin=c]{90}{\parbox{2cm}{\centering \texttt{FEMNIST}\\ \texttt{GDrift}}}}
    & \multirow{4}{*}{0.1} 
    & AEQ   & 0.82+-0.09 & 0.82+-0.09 & \textbf{0.90+-0.07} & 0.91+-0.05 \\
    & & OEQ & 0.75+-0.07 & 0.76+-0.07 & \textbf{0.81+-0.07} & 0.83+-0.05 \\
    & & OPP & 0.75+-0.08 & 0.76+-0.08 & \textbf{0.82+-0.06} & 0.83+-0.05 \\
    & & ACC & \textbf{0.87+-0.02} & \textbf{0.87+-0.04} & 0.85+-0.04 & 0.86+-0.03 \\
    \cline{2-7}
    & \multirow{4}{*}{0.05} 
    & AEQ   & 0.78+-0.11 & 0.78+-0.11 & \textbf{0.83+-0.11} & 0.85+-0.08 \\
    & & OEQ & 0.68+-0.09 & 0.70+-0.08 & \textbf{0.72+-0.09} & 0.73+-0.08 \\
    & & OPP & 0.68+-0.10 & 0.69+-0.09 & \textbf{0.73+-0.09} & 0.74+-0.09 \\
    & & ACC & 0.87+-0.02 & \textbf{0.88+-0.02} & 0.86+-0.04 & 0.87+-0.03 \\
    \midrule
    \multirow{8}{*}{\rotatebox[origin=c]{90}{\parbox{2cm}{\centering \texttt{CIFAR-10}\\ \texttt{GDrift}}}}
    & \multirow{4}{*}{0.5} 
    & AEQ   & 0.54+-0.17 & 0.71+-0.14 & \textbf{0.73+-0.12} & 0.77+-0.09 \\
    & & OEQ & 0.54+-0.17 & 0.71+-0.14 & \textbf{0.73+-0.12} & 0.76+-0.09 \\
    & & OPP & 0.64+-0.18 & 0.83+-0.14 & \textbf{0.85+-0.12} & 0.89+-0.08 \\
    & & ACC & \textbf{0.76+-0.04} & 0.74+-0.10 & \textbf{0.76+-0.09} & 0.81+-0.04 \\
    \cline{2-7}
    & \multirow{4}{*}{0.25} 
    & AEQ   & 0.41+-0.15 & 0.51+-0.14 & \textbf{0.55+-0.13} & 0.62+-0.09 \\
    & & OEQ & 0.41+-0.15 & 0.51+-0.14 & \textbf{0.55+-0.12} & 0.62+-0.09 \\
    & & OPP & 0.57+-0.19 & 0.70+-0.18 & \textbf{0.78+-0.15} & 0.83+-0.10 \\
    & & ACC & \textbf{0.81+-0.03} & 0.77+-0.10 & 0.78+-0.08 & 0.83+-0.03 \\
    \midrule
    \multirow{8}{*}{\rotatebox[origin=c]{90}{\parbox{2cm}{\centering \texttt{Adult}\\ \texttt{GDrift}}}}
    & \multirow{4}{*}{0.1} 
    & AEQ   & 0.92+-0.07 & 0.93+-0.05 & \textbf{0.95+-0.04} & 0.94+-0.04 \\
    & & OEQ & 0.83+-0.10 & 0.83+-0.11 & \textbf{0.85+-0.09} & 0.86+-0.09 \\
    & & OPP & 0.86+-0.09 & 0.88+-0.08 & \textbf{0.89+-0.07} & 0.89+-0.07 \\
    & & ACC & 0.83+-0.02 & \textbf{0.84+-0.02} & 0.83+-0.02 & 0.83+-0.02 \\
    \cline{2-7}
    & \multirow{4}{*}{0.05} 
    & AEQ   & 0.92+-0.07 & 0.93+-0.05 & \textbf{0.94+-0.04} & 0.94+-0.04 \\
    & & OEQ & 0.83+-0.10 & 0.82+-0.10 & \textbf{0.85+-0.08} & 0.86+-0.08 \\
    & & OPP & 0.86+-0.08 & 0.88+-0.07 & \textbf{0.89+-0.07} & 0.89+-0.06 \\
    & & ACC & \textbf{0.84+-0.01} & \textbf{0.84+-0.01} & \textbf{0.84+-0.01} & 0.84+-0.01 \\
    \bottomrule
\end{tabular}
\caption{Average results for the \texttt{MNIST-GDrift}, \texttt{FEMNIST-GDrift}, and \texttt{Adult-GDrift} datasets under different $\alpha$ values.}
\label{table:res}
\end{table}

We delve into the evaluation of the algorithms on the described datasets, considering different $\alpha$ values and distinct scenarios of group-specific distributed concept drift presented before. Table \ref{table:res} provides the average results of all clients on all timesteps. The results demonstrate the efficacy of FairFedDrift in achieving favorable fairness outcomes (AEQ, OEQ, OPP) with minimal impact on accuracy. Notably, FairFedDrift's fairness results results closely approaches that of the idealized Oracle algorithm. 

The following sections will elucidate the mechanisms behind FairFedDrift's superior performance and provide an in-depth analysis of why it achieves better fairness outcomes.

\subsection{Unveiling the Influence of \(\alpha\) on Loss}

We explore how varying the $\alpha$ parameter, which determines the size of the unprivileged group relative to the privileged group, impacts both the overall loss $\ell$ and group-specific loss $\ell_{S=0}$. We use FedAvg as the base algorithm to observe the behavior of the model under two different $\alpha$ values: 0.1 and 0.5. These values represent different levels of heterogeneity in group distributions, with lower $\alpha$ values indicating more heterogeneous groups and higher $\alpha$ values representing more homogeneous ones.

\begin{figure}[h!]
    \centering
    \includegraphics[width= \columnwidth]{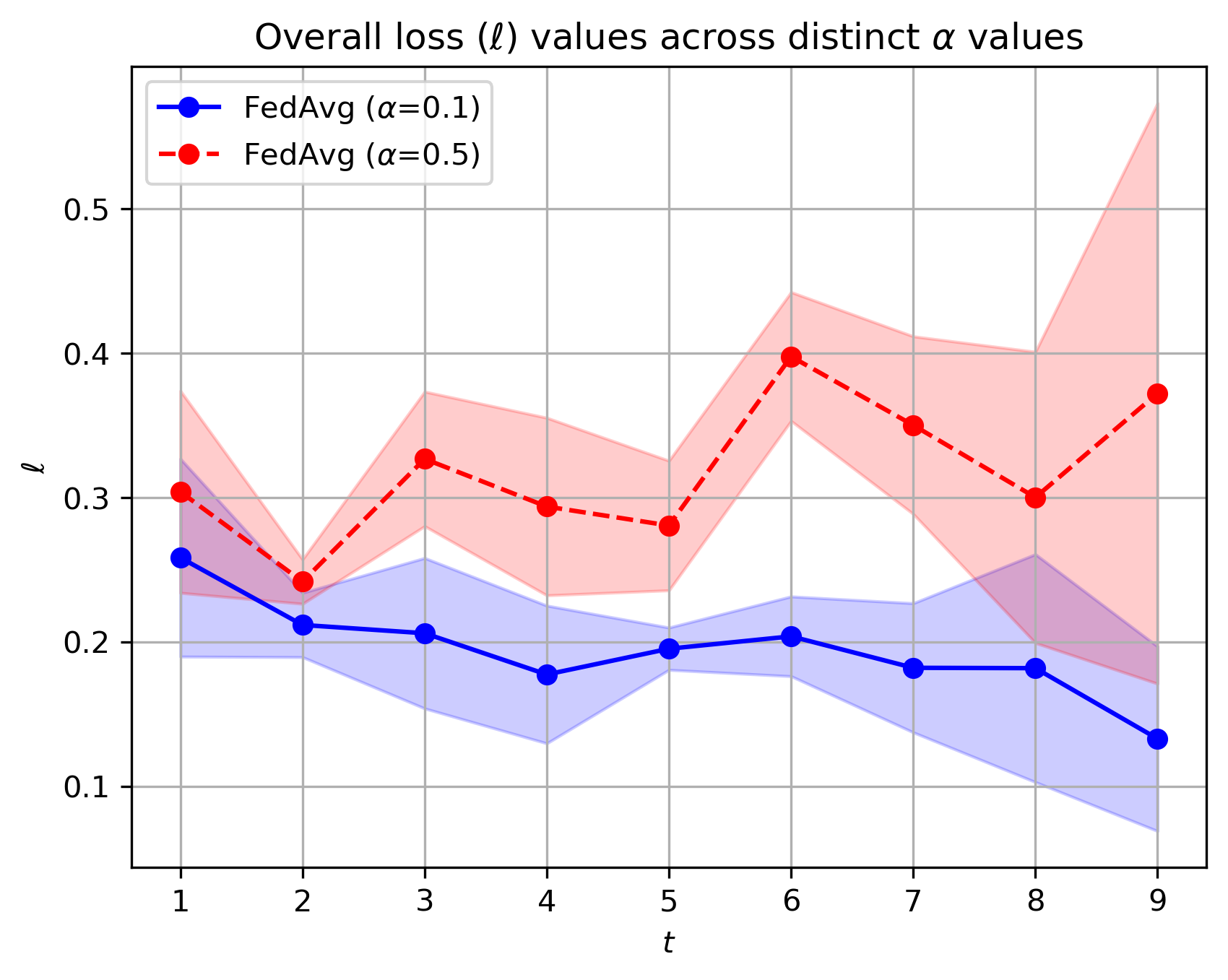} \\
    \includegraphics[width= \columnwidth]{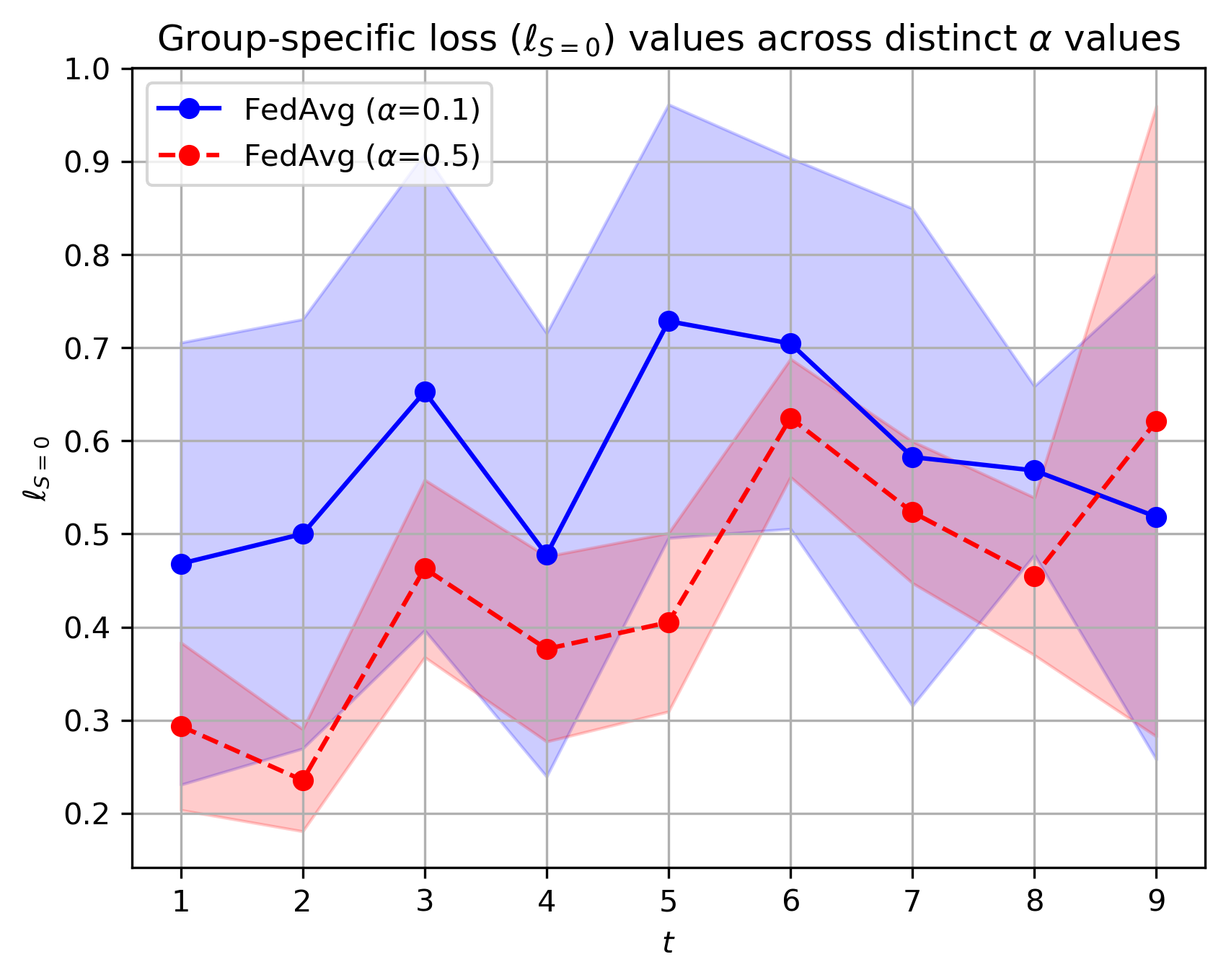}
    \caption{Average clients' loss ($\ell$) and group-specific ($S=0$) loss ($\ell_{S=0}$) results for the \texttt{MNIST-GDrift} dataset considering Scenario 4.1, $\alpha=0.1$ and $\alpha=0.5$.}
    \label{fig:loss-results}
\end{figure}

\begin{figure*}[h]
    \centering
    \includegraphics[width=\linewidth]{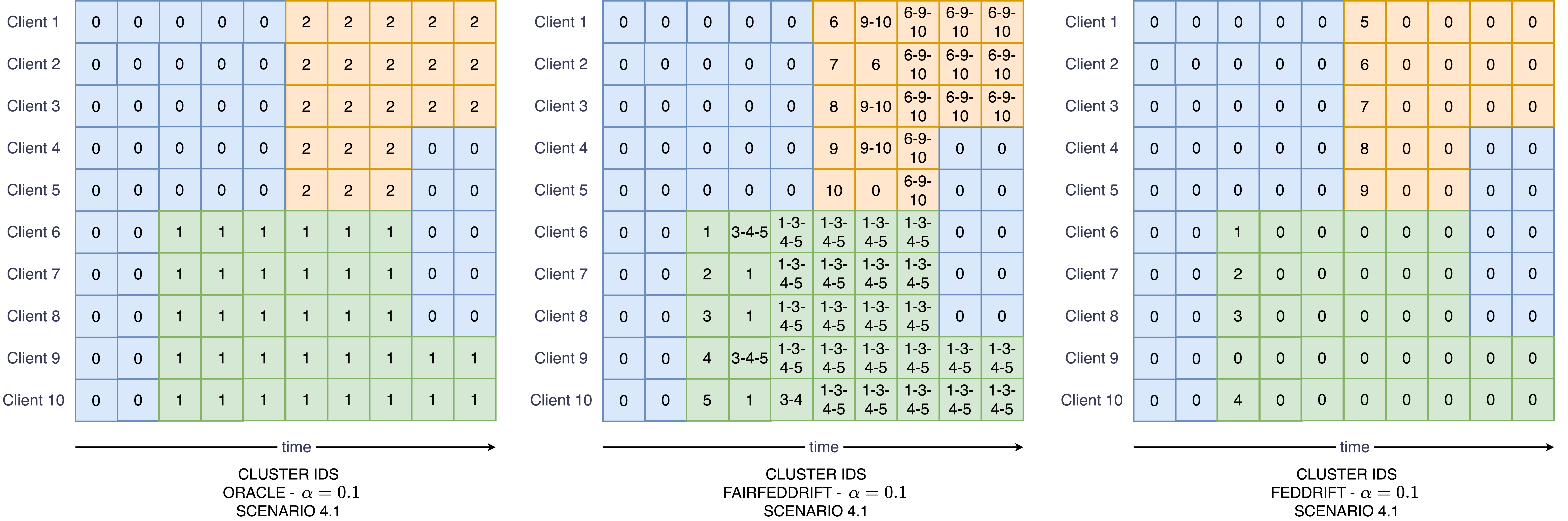}
    \caption{Clustering of FedDrift, FairFedDrift and Oracle on the \texttt{MNIST-GDrift} dataset, considering \(\alpha=0.1\) and Scenario 4.1. Each cell represents the model ID at each client and timestep, while background colors denote the ground-truth concept.}
    \label{fig:clusters}
\end{figure*}

Figure \ref{fig:loss-results} shows the results of these experiments. For $\alpha=0.1$, the overall loss $\ell$ remains relatively stable over time, fluctuating only slightly. However, when we isolate the loss for the unprivileged group ($S=0$), we observe much more substantial variation. This indicates that even though the model's overall performance does not appear to degrade, certain subsets of the population (in this case, the unprivileged group) are experiencing higher loss, which raises fairness concerns. This scenario highlights a significant challenge: relying on overall performance as the sole indicator of drift can obscure issues faced by smaller or less represented groups.

FedDrift, designed to detect concept drift by monitoring loss changes, faces challenges in these situations. Since the overall loss remains stable, the algorithm may struggle to detect drift in specific subgroups such as the unprivileged group. Consequently, tuning the threshold $\delta$ in FedDrift becomes difficult - smaller values can lead to too many clusters, while larger values may fail to detect group-specific concept drift. This illustrates the limitations of FedDrift in addressing subtle changes affecting only a portion of the population, as previously demonstrated.

Conversely, for $\alpha=0.5$, both the overall loss and the unprivileged group's loss exhibit more pronounced variations. In this case, the unprivileged group is better represented among the clients, leading to more noticeable loss changes. The fluctuations in both the overall model loss and the group-specific loss occur together, making it easier for FedDrift to detect drift and adjust accordingly. This scenario reflects a situation where drift detection is more effective, as group-specific loss changes significantly impact the overall performance.

The key takeaway from these experiments is that in scenarios with highly heterogeneous distributions, where the unprivileged group is less represented, concept drift can be difficult to detect using traditional drift detection methods that do not account for group-specific variations. In real-world applications, heterogeneous distributions are quite common, and unprivileged groups are typically in the minority. As such, detecting drift in these groups is important for maintaining fairness across the population. Models that rely solely on overall performance metrics risk overlooking fairness issues and disproportionately affecting underrepresented groups. This underscores the importance of designing drift detection algorithms, such as the one present in the FairFedDrift algorithm, that are sensitive to group-specific performance changes, even when overall metrics appear stable.

\begin{figure}[h]
    \centering
    \includegraphics[width=\linewidth]{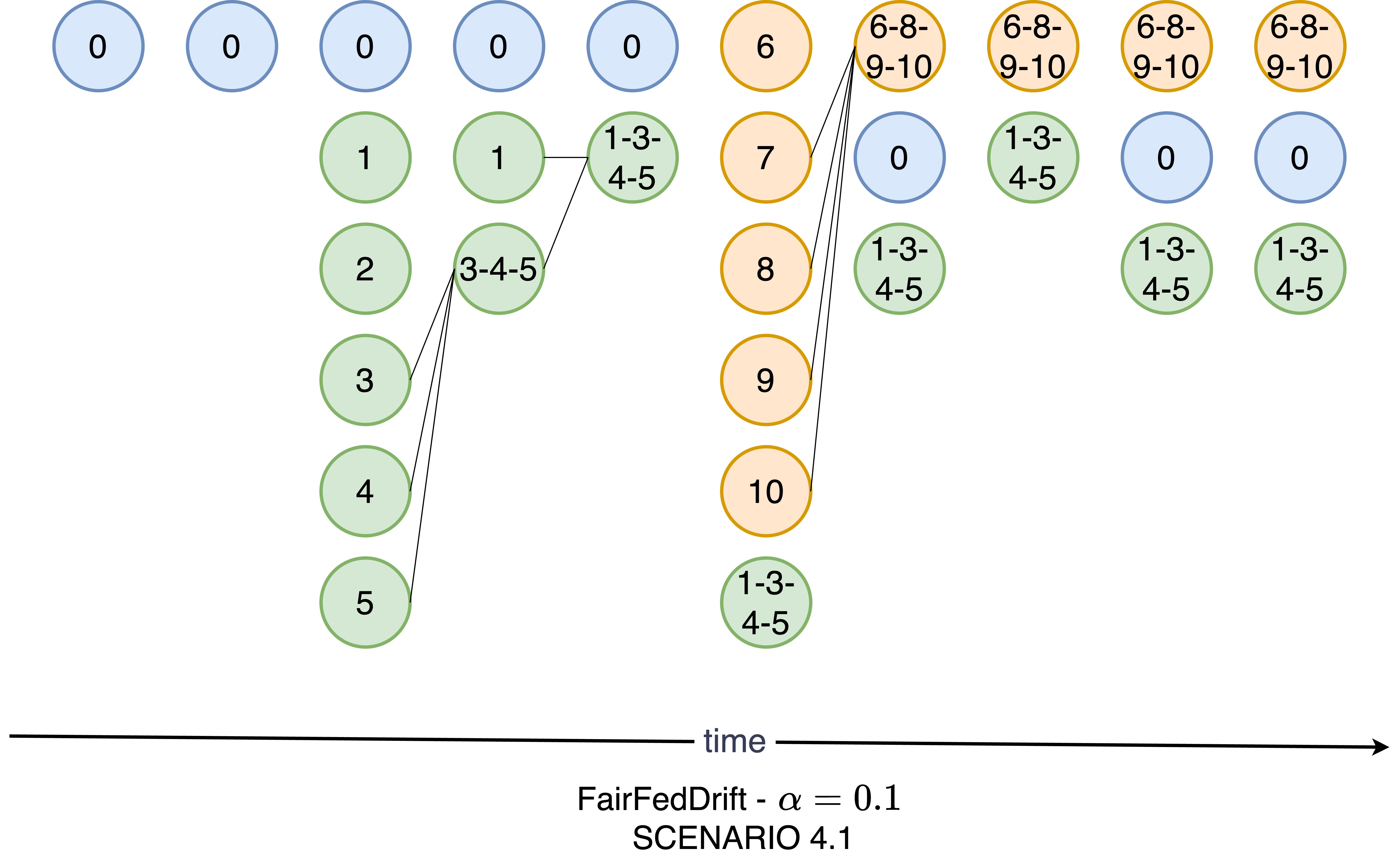}
    \caption{Model merge process over time in FairFedDrift on the \texttt{MNIST-GDrift} dataset, considering \(\alpha=0.1\) and Scenario 4.1. Each node represents a global model at a specific timestep, with background colors denoting the original concepts in which the models where created. Edges indicate model merges based on group-specific loss thresholds. This Figure complements Figure \ref{fig:clusters}, providing an aggregated view of how model clusters evolve and merge over time.}
    \label{fig:merging}
\end{figure}

\begin{figure*}[h!]
    \centering
    \includegraphics[width=\linewidth]{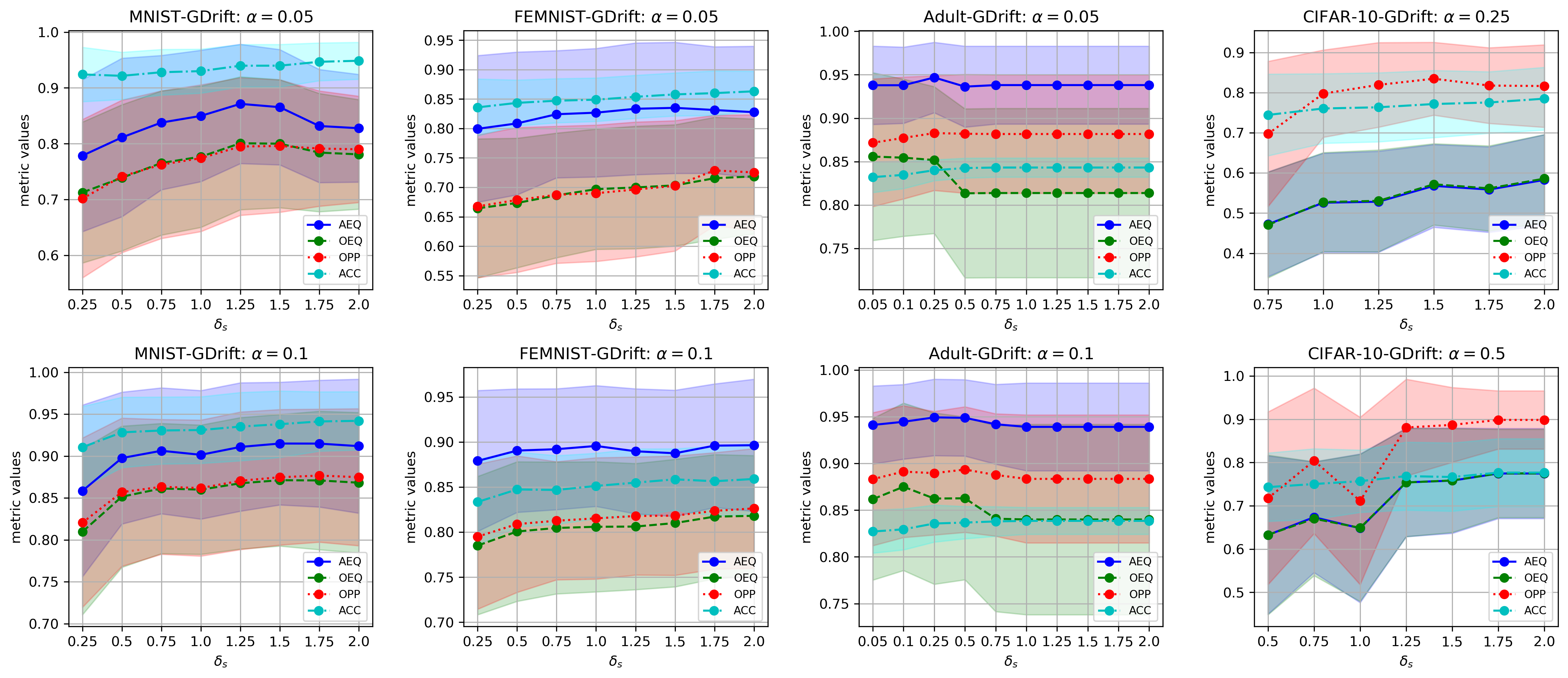}
    \caption{Effect of $\delta_s$ on fairness and performance across the \texttt{MNIST-GDrift}, \texttt{FEMNIST-GDrift}, \texttt{Adult-GDrift} and \texttt{CIFAR-10-GDrift} datasets evaluated under Scenario 4.1.}
    \label{fig:delta_s}
\end{figure*}

\subsection{Comparative Cluster Creation: FedDrift vs. FairFedDrift}

Figure \ref{fig:clusters} presents the clustering outcomes achieved by FedDrift, FairFedDrift and Oracle on the \texttt{MNIST-GDrift} dataset considering $\alpha=0.1$ and Scenario 4.1. Although both FedDrift and FairFedDrift can detect concept drift in the third timestep, it can be observed in the fourth timestep that clients in the green concept in FedDrift prefer to associate themselves with the original model (`0'), while FairFedDrift's clients in the green concept associate themselves with models created in the previous timestep. While FairFedDrift's approach to associating clients according to the sum of losses (line 8 in Algorithm \ref{algo:fair-fed-drift}) is important in this scenario, FedDrift lacks this mechanism, resulting in clients preferring the original model.

Furthermore, FairFedDrift effectively merges models corresponding to the same underlying concept (e.g., `1-3-4-5'). This behavior is further illustrated in Figure \ref{fig:merging}, which visualizes the model merge process over time. While Figure \ref{fig:clusters} highlights per-client model associations across timesteps, Figure \ref{fig:merging} offers an additional view of how FairFedDrift merges models. This visual analysis demonstrates FairFedDrift's ability to achieve more accurate and adaptive clustering, thereby offering a robust solution to the problem of group-specific distributed concept drift.

Finally, FairFedDrift offers easier tuning in situations of group-specific concept drift compared to FedDrift. The loss of the unprivileged group undergoes significant variations over time, while the overall loss remains relatively stable. This makes tuning FedDrift more challenging, as it may struggle to detect significant loss changes, making the tuning of its overall loss threshold problematic - smaller values result in excessive clusters, while larger values may fail to detect group-specific concept drift altogether.

\subsection{Effect of $\delta_s$ on Fairness and Performance}

We explore the effect of tuning the hyperparameter $\delta_s$, which controls the acceptable loss difference for a group $s$ between two consecutive timestamps on a global model. $\delta_s$ also acts as a threshold for merging similar models, as described in Algorithm \ref{algo:fair-fed-drift}. The results of different values of $\delta_s$ are presented in Figure \ref{fig:delta_s}.

For the \texttt{Adult-GDrift} dataset, smaller values of $\delta_s$ (between 0.05 and 0.5) work best. In this case, concept drift occurs only through changes in specific smaller subgroups, leading to smoother variations in the loss. In contrast to the other datasets, where drift involves more noticeable label swaps, the drift in \texttt{Adult-GDrift} is more subtle. As a result, smaller $\delta_s$ thresholds are better suited to detect and respond to these drifts. On the other hand, for the image datasets (\texttt{MNIST-GDrift}, \texttt{FEMNIST-GDrift}, and \texttt{CIFAR-10-GDrift}), larger thresholds (around 1.25 to 1.75) are more appropriate. In this case, a higher $\delta_s$ prevents over-sensitivity to minor loss variations and stabilizes clustering.

Finally, with respect to fairness-utility trade-offs, it can be seen that when $\delta_s$ is very small, FairFedDrift may create too many clusters (even one per client) which reduces both fairness and accuracy due to insufficient data per model. As $\delta_s$ increases, clustering becomes more meaningful, leading to improvements in both metrics. However, if $\delta_s$ is set too high (e.g., \texttt{MNIST-GDrift} with $\alpha=0.05$ and $\delta_s=2.0$), drift may remain undetected, causing all clients to converge to a single model - effectively reducing the method to standard FedAvg. In such cases, utility may remain high due to larger training data, but fairness deteriorates, particularly when the unprivileged group is underrepresented.

\subsection{Effect of Window Size on Fairness and Performance}

Finally, we introduce and examine the effect of window size on the performance of our framework. As explained before, the window size parameter determines how much past information is used during model assignment and concept drift detection. To evaluate the impact of window size, we conducted experiments across different window size configurations, ranging from retaining the entire history of client identities and local datasets to utilizing only a sliding window of the latest timesteps. 

Figure \ref{fig:window} illustrates the results of this experiment. It can be observed that in this case maintaining a window size of 3 was sufficient, as increasing the window size beyond this had minimal impact on both fairness and performance across different configurations. Even with a reduced history of data retention, the system consistently achieved comparable levels of fairness and accuracy, demonstrating that a small window size strikes a good balance between computational efficiency and effective drift detection without sacrificing performance.

\begin{figure}[h!]
    \centering
    \includegraphics[width= 0.9\columnwidth]{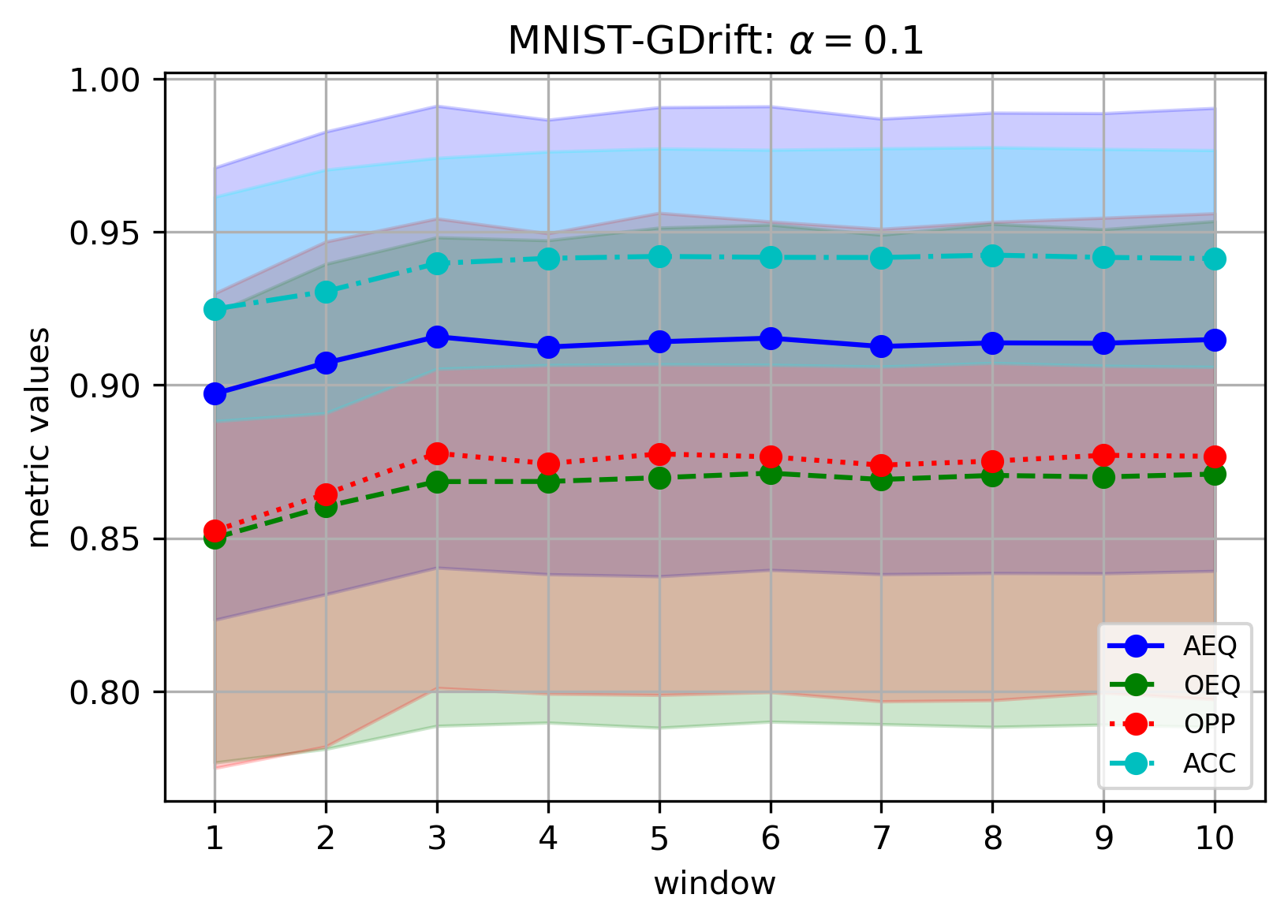}
    \caption{Effect of window size on fairness and performance on the \texttt{MNIST-GDrift} dataset considering $\alpha=0.1$ and Scenario 4.1.}
    \label{fig:window}
\end{figure}

Given the negligible impact of window size on fairness and performance, it is possible to mitigate the overhead associated with preserving the entire history of client identities and local datasets -  clients can opt to maintain only a sliding window of the latest timesteps. By adopting this approach, clients can significantly reduce the computational and storage burden without compromising fairness or performance. Moreover, in certain scenarios, forgetting older data might actively aid in adapting to concept drift, enhancing the system's resilience to changing data distributions over time.

\subsection{Fairness and Performance Across Varying Scenario Complexity}

We analyze the results across two different scenarios (4.1 and 4.5 in Figure \ref{fig:scenarios}), which vary in complexity and drift patterns. Scenario 4.1 represents a simpler case with only three distinct concepts, while Scenario 4.5 presents a more challenging environment, featuring five distinct concepts that emerge in a less predictable, more random fashion. Some concepts even appear simultaneously for the first time, adding further complexity to the drift detection and adaptation.

\begin{table}[h!]
\centering
\renewcommand{\arraystretch}{1.2}
\setlength{\tabcolsep}{4pt}
\begin{tabular}{ccccc:c}
    \toprule
    \multirow{2}{*}{Scenario} & \multirow{2}{*}{Metric} & \multicolumn{4}{c}{Algorithm} \\
    \cline{3-6}
    & & FedAvg & FedDrift & FairFedDrift & Oracle \\ \midrule
    \multirow{4}{*}{4.1} 
    & AEQ & 0.84+-0.07 & 0.85+-0.09 & \textbf{0.91+-0.08} & 0.94+-0.05 \\ 
    & OEQ & 0.81+-0.06 & 0.82+-0.08 & \textbf{0.87+-0.08} & 0.89+-0.05 \\  
    & OPP & 0.81+-0.06 & 0.82+-0.08 & \textbf{0.88+-0.08} & 0.90+-0.05 \\   
    & ACC & 0.94+-0.01 & \textbf{0.96+-0.02} & 0.94+-0.04 & 0.95+-0.02 \\ \midrule
    \multirow{4}{*}{4.5}
    & AEQ & 0.80+-0.10 & 0.80+-0.09 & \textbf{0.90+-0.09} & 0.92+-0.05 \\ 
    & OEQ & 0.77+-0.09 & 0.78+-0.07 & \textbf{0.86+-0.09} & 0.88+-0.05 \\  
    & OPP & 0.77+-0.10 & 0.77+-0.07 & \textbf{0.86+-0.09} & 0.89+-0.05 \\   
    & ACC & 0.94+-0.02 & \textbf{0.95+-0.02} & 0.93+-0.04 & 0.94+-0.03 \\ \bottomrule
\end{tabular}
\caption{Results for two distinct scenarios presented in Figure \ref{fig:scenarios} on the \texttt{MNIST-GDrift} dataset and $\alpha=0.1$.}
\label{table:res-scenarios}
\end{table}

The results, summarized in Table \ref{table:res-scenarios}, illustrate that while Scenario 4.1 was the easiest to handle due to its predictable concept changes, Scenario 4.5 was the most difficult due to the unpredictability and overlap of drifts. Despite this complexity, FairFedDrift consistently performed better compared to FedAvg and FedDrift in terms of fairness across both scenarios, maintaining a competitive accuracy level. Notably, even in the most challenging setting of Scenario 4.5, FairFedDrift maintained a high level of fairness. Moreover, FairFedDrift's fairness and performance in both scenarios closely approached that of the idealized Oracle algorithm.

These results underscore the effectiveness of the proposed method in handling both simple and complex drift scenarios, demonstrating FairFedDrift’s capability to preserve fairness without significant accuracy loss, even under challenging conditions.

\section{Discussion and Future Work}\label{sec:discussion}

This paper introduces and formalizes the concept of group-specific distributed concept drift within FL, providing a framework to address fairness challenges in scenarios with evolving data distributions. While significant progress has been made in advancing fairness-aware FL, several directions remain open for further exploration, particularly in improving the scalability, efficiency, and practical deployment of FairFedDrift.

\paragraph{Automatic Hyperparameter Tuning} One promising area for future work is the automation of the hyperparameter selection process for group-specific thresholds. In its current form, these hyperparameters require manual tuning, which may limit the generalizability of the algorithm to different datasets and application scenarios. Developing adaptive methods for these parameters would enhance the flexibility and robustness of FairFedDrift, reducing the need for expert intervention and allowing for broader applicability across diverse use cases.

\paragraph{Fairness Metrics and Evaluation under Concept Drift} Existing fairness metrics do not fully account for the subtleties of concept drift in FL settings. There is a need to develop specialized metrics that can capture the evolving fairness dynamics as data distributions drift over time. These dynamic metrics should track fairness over time, providing insights into how well fairness-aware algorithms such as FairFedDrift can adapt to evolving conditions and ensure equitable outcomes across different groups.

\paragraph{Scalability and Communication Overhead} A key challenge for the real-world application of FairFedDrift is ensuring scalability and minimizing communication overhead. Monitoring group-specific losses and managing multiple global models can introduce significant computational and communication costs, particularly in large-scale federated systems. To address this, one potential approach is to use gradient similarity techniques to identify models with similar learning patterns and parameters. Additionally, developing novel mechanisms for dynamically selecting the best global model for each client based on their data, such as a cluster assignment model, could further enhance efficiency. Future research should also focus on analyzing various metrics to evaluate latency, communication efficiency, and computational overhead, considering the time, data exchanged, and computational resources required for each operation described in Section \ref{sec:fairfeddrift}.

\paragraph{Real-World Federated Learning Datasets with Concept Drift} To further validate the robustness and applicability of FairFedDrift, future research should focus on real-world datasets that embody spacial and temporal concept drift. Developing and using datasets from sensitive domains that capture the complexities of group-specific concept drift in decentralized environments would be important for a realistic evaluation of FairFedDrift's effectiveness in addressing fairness challenges in real-world FL settings.

\section{Conclusions}\label{sec:conclusions}

This work presents a pioneering effort in introducing and formalizing the problem of group-specific concept drift within FL frameworks. Our experiments underscore the detrimental impact of this phenomenon on fairness, emphasizing the need for effective mechanisms to address evolving data distributions while upholding fairness principles. Our contributions include the formalization of group-specific concept drift and its distributed counterpart, an experimental framework for studying it, and the application of model clustering and fairness drift monitoring techniques to mitigate the challenges posed by this issue. 

We demonstrate that in the absence of group-specific distributed concept drift handling, fairness decreases. In addition, for scenarios with high group imbalance, our findings indicate that overall loss and accuracy do not change significantly. However, there is a noticeable decrease in fairness due to the changing loss of the unprivileged group. In contrast, as the representation of the unprivileged group increases, both overall loss and the loss of the unprivileged group exhibit more noticeable changes. Our proposed algorithm, FairFedDrift, emerged as a practical solution for achieving fairness in FL amid group-specific distributed concept drift. By continuously monitoring group-specific loss and adapting to changes in data distributions, FairFedDrift demonstrated its effectiveness in mitigating unfairness.

The intricate issues of fairness and group-specific distributed concept drift demand ongoing exploration and refinement. Our work lays the foundation for addressing these challenges as machine learning continues to evolve in sensitive domains.

\section*{Ethical Implications}

While our research aims to achieve fairness, it is essential to recognize that this objective may not be guaranteed in all use cases. The fairness metrics selected for evaluation and the choice of sensitive attributes are often contingent upon the application domain. Moreover, the binarization process employed to represent different groups of sensitive attributes may not capture the diversity within each group. These considerations underscore the complexity of fairness in machine learning and highlight the importance of continual improvement to address evolving ethical concerns.

\ifCLASSOPTIONcaptionsoff
  \newpage
\fi



%

\bibliographystyle{IEEEtran}
\bibliography{bib.bib}

\begin{IEEEbiography}[{\includegraphics[width=1in,height=1.25in,clip,keepaspectratio]{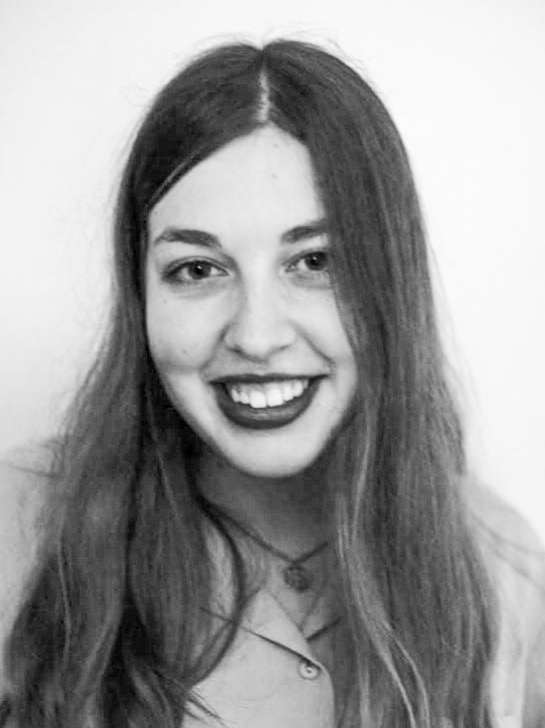}}]
{Teresa Salazar} is a PhD candidate and a member of the Centre for Informatics and Systems at the University of Coimbra. She received a B.S. degree in Informatics Engineering from the same university in 2018 and a M.S. degree in Informatics from the University of Edinburgh in 2019 with specialization in Machine Learning. Her primary research interests are mainly focused on topics in fairness, imbalanced data, and federated learning.
\end{IEEEbiography}

\begin{IEEEbiography}[{\includegraphics[width=1in,height=1.25in,clip,keepaspectratio]{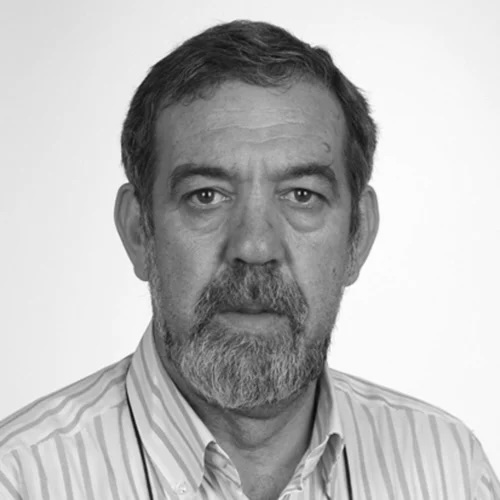}}]
{Jo\~ao Gama} is a Full Professor at the University of Porto, Portugal. He received his PhD in Computer Science from the University of Porto in 2000. He is a EurIA Fellow, IEEE Fellow, Fellow of the Asia-Pacific AI Association, and a member of the board of directors of the LIAAD-INESC TEC. He is an ACM Distinguish Speaker. He is an Editor of several top-level Machine Learning and Data Mining journals. His main research interests are knowledge discovery from data streams, evolving network data, probabilistic reasoning, and causality. He has an extensive list of publications in data stream learning.
\end{IEEEbiography}

\begin{IEEEbiography}[{\includegraphics[width=1in,height=1.25in,clip,keepaspectratio]{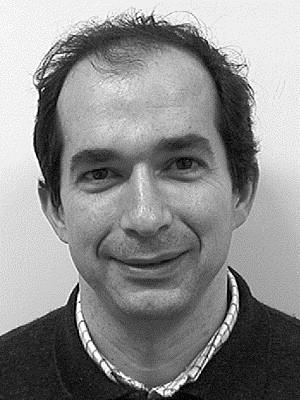}}]
{Helder Ara\'ujo} is a Full Professor at the Department of Electrical and Computer Engineering of the University of Coimbra. His research interests include computer vision applied to robotics, robot navigation and visual servoing. He has also worked on non central camera models, including aspects related to pose estimation, and their applications. Recently he has started work on the development of vision systems applied to medical endoscopy with focus on capsule endoscopy.
\end{IEEEbiography}

\begin{IEEEbiography}[{\includegraphics[width=1in,height=1.25in,clip,keepaspectratio]{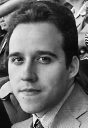}}]
{Pedro Henriques Abreu (PHA)} is an Associate Professor with Habilitation at the Department of Informatics of the University of Coimbra in Portugal, full member of the Cognitive and Media Systems of CISUC, and currently the coordinator of the Master of Informatics Engineering. He is also one of the area editors of the Information Fusion journal and one of the editors of the Data Science and Analytics journal. His research interests include the development of Data Centric AI approaches specially related to missing and imbalance data and data fairness. He is the author of more than 100 refereed journal and conference papers in these areas, and his peer-reviewed publications have been nominated and awarded multiple times as best papers. 
\end{IEEEbiography}

\appendix
\section*{Complexity Analysis} 

To evaluate the computational complexity of FairFedDrift, we compare its key steps with those of the standard FedAvg algorithm. The table below provides a breakdown of computational complexity at different stages of the training process.  

\begin{table}[h]
\centering
\renewcommand{\arraystretch}{1.3}
\begin{tabular}{|p{1.0cm}|p{0.8cm}|p{1.6cm}|p{1.6cm}|p{1.9cm}|}
\hline
\textbf{Step} & \textbf{FedAvg Cost} & \textbf{FedAvg \newline Details} & \textbf{FairFedDrift Cost} & \textbf{FairFedDrift Details} \\  
\hline
Model Sending & $\mathcal{O}(K)$ & The server sends a global model to each of the $K$ clients. & $\mathcal{O}(MK)$ & The server sends $M$ models to each of the $K$ clients. \\  
\hline
Selecting the Best Model & N/A & - & $\mathcal{O}(MSK)$ & Each client evaluates the loss of $S$ groups on each of the $M$ models. \\  
\hline
Merging & N/A & - & $\mathcal{O}(M^2 \log M)$ & The server creates a distance $M \times M$ matrix based on losses of $S$ groups. Then, clustering is applied. \\  
\hline
Local Training & $\mathcal{O}(K)$ & Each client trains a model. & $\mathcal{O}(MK)$ & Each client trains each model $M'$ it has been associated with until that timestep. \\  
\hline
Model Sending to Server & $\mathcal{O}(K)$ & Each client sends its trained model to server. & $\mathcal{O}(M'K)$ & Each client sends its trained $M'$ models to server. \\  
\hline
Model Averaging & $\mathcal{O}(1)$ & The server averages models from $K$ clients. & $\mathcal{O}(M)$ & The server averages $M$ models sent by clients. \\  
\hline
\end{tabular}
\caption{Computational complexity comparison of FedAvg and FairFedDrift for each round within a timestep. $K$ is the number of clients, $M$ is the number of global models, $S$ is the number of sensitive groups.}
\label{tab:complexity_analysis}
\end{table}

Regarding communication costs, the total communication cost over the entire training process can be expressed as follows. In FedAvg, each round involves sending and receiving one model per client, leading to a total of  \(\mathcal{O}(K \times R \times T)\) model transmissions across \(R\) rounds and \(T\) timesteps. In contrast, FairFedDrift’s multi-model approach results in  \(\mathcal{O}(M \times K \times R \times T)\) transmissions, as each of the \(M\) models is communicated to and from each client in every round.

Regarding computation costs, FairFedDrift requires additional computation compared to FedAvg due to its multi-model approach. Unlike FedAvg, where each client trains and sends a single model, FairFedDrift clients select the best model among \(M\) candidates, train all \(M'\) models they have been associated with locally, and the server averages all \(M\) global models instead of just one. Note that in training, \(M'\) refers to the number of models a client has been associated with, which may be fewer than the total number of global models. Moreover, the server also needs to perform global model merging, which involves computing a distance matrix of size \(M \times M\) based on group-specific losses and then applying a clustering algorithm.

This breakdown confirms that while FairFedDrift introduces higher theoretical costs, these remain practical in real-world scenarios. For example, the number of models $M$ is upper-bounded by the number of distinct behavioral patterns. Similarly, the number of sensitive groups $S$ is generally limited, as clients typically contain only a small number of demographic subpopulations. Importantly, these additional costs are a necessary trade-off to effectively capture and adapt to group-specific distributed concept drift, which is essential for ensuring fairness across heterogeneous client populations. Despite this, the sliding window mechanism discussed earlier can mitigate excessive memory and computation overhead by restricting historical data retention.

\section*{Theoretical Guarantees of FairFedDrift} 

Consider a FL setting with a set of clients \( k \in K \), timesteps \( t \in T \), and sensitive groups \( s \in S \). At each timestep \( t \), client \( k \) receives new data \( D_k^t \), sampled from the local distribution \( P_k^t \). The distribution \( P_k^t \) is generated from one of a finite set of \( M \) distinct concepts \( \{C_1, \ldots, C_M\} \). Specifically, the distribution at timestep \( t \) corresponds to concept \( C_{m(t)} \), where  
\[
m : T \to \{1, \ldots, M\}
\]
is a (potentially non-stationary and recurrent) mapping indicating the active concept at time \( t \). That is, concepts may change over time and can reoccur multiple times.

\vspace{0.5em}
\noindent
Consider two algorithms:
\begin{itemize}
    \item \textsc{FairFedDrift}: adapts to group-specific drift using group-specific drift detection and multiple global models $GM$, where each $\theta_m \in GM$ is specialized in concept \( C_m \);
    \item \textsc{NaiveFed}: uses a single global model $\theta_{\text{naive}}$ without group-specific drift detection.
\end{itemize}

\vspace{0.5em}
\noindent

Let \( \theta \) denote a set of model parameters. We define $\ell(\theta)^{D_k^{t}}_{s}$ as the loss for group $s$ and client $k$ on model $\theta_m$ at timestep $t$. To evaluate fairness at the client level, we define a \textit{group fairness disparity} metric for a given client \( k \in K \), model \( \theta \), and timestep \( t \in T \) as:
\[
\Delta_k^t(\theta) := \max_{s, s' \in S} \left| \ell(\theta)^{D_k^{t}}_s - \ell(\theta)^{D_k^{t}}_{s'} \right|
\]
This metric captures the \textit{worst-case disparity in loss across sensitive groups} within a client's local distribution. A lower value of \( \Delta_k^t(\theta) \) indicates \textit{more equitable treatment of groups}, aligning with group fairness notions, and is thus used as a measure of fairness in our comparison of \textsc{FairFedDrift} to \textsc{NaiveFed}.

\begin{proposition}[Group-Specific Drift Detection]
\label{prop:group-loss-drift} At each timestep \( t \), a client \( k \in K \) selects a model \( \theta_{m} \in GM \) from the current set of global models \( GM \) such that:
\begin{itemize}
    \item For all sensitive groups \( s \in S \), the group loss under \( \theta_{m} \) does not exceed the group's loss at the previous timestep by more than the drift threshold \( \delta_s \), i.e.,
    \[
    \ell(\theta_{m})^{D_k^{t}}_{s} \leq \ell(\theta_{w_k^{t-1}})^{D_k^{t-1}}_{s} + \delta_s
    \]
    \item Among all models satisfying the above condition, \( \theta_{w_k^t} \) minimizes the total loss across groups:
    \[
    \theta_{w_k^t} = \arg\min_{\theta_m \in GM} \sum_{s \in S} \ell(\theta_m)^{D_k^t}_s
    \]
\end{itemize}

If no model in \( GM \) satisfies the threshold condition for all groups \( s \in S \), i.e.,
\[
\forall \theta_m \in GM, \exists s \in S \text{ such that } \ell(\theta_m)^{D_k^{t}}_s > \ell(\theta_{w_k^{t-1}})^{D_k^{t-1}}_s + \delta_s,
\]
then the client flags this as concept drift, and a new specialized global model \( \theta_{\text{new}} \) is instantiated, added to \( GM \) and trained on the data from the new distribution.

\medskip

\textbf{Remark} (On the Role of $\delta_s$): In \textsc{FairFedDrift}, each $\delta_s$ is used exclusively as a drift detection threshold. Specifically, $\delta_s$ determines whether a model continues adequately represents a group $s$. If the group-specific loss increases by more than $\delta_s$ between consecutive timesteps, the algorithm flags this as concept drift and instantiates a new specialized model. Because real-world group-specific concept drift can induce arbitrarily large shifts in $P(y \mid X, S=s)$, it is not possible to bound group loss purely in terms of $\delta_s$. Thus, \textsc{FairFedDrift} leverages $\delta_s$ to detect distributional changes early, enabling responsive specialization that mitigates long-term group disparity.

\end{proposition}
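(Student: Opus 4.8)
The plan is to prove the proposition by \emph{direct verification against the pseudo-code of Algorithm~\ref{algo:fair-fed-drift}}, since the statement is precisely a characterization of the per-timestep model-selection and drift-detection that a client performs in \textsc{FairFedDrift} (the drift check on line~4 and the association on line~8). The argument is therefore constructive rather than analytic: each clause of the proposition is matched to the algorithmic step that realizes it. To organize this, I would first introduce the \emph{feasible set} of models at client $k$ and timestep $t$,
\[
\mathcal{F}_k^t := \Bigl\{\, \theta_m \in GM : \forall s \in S,\ \ell(\theta_m)^{D_k^{t}}_{s} \leq \ell(\theta_{w_k^{t-1}})^{D_k^{t-1}}_{s} + \delta_s \,\Bigr\},
\]
i.e.\ the models that keep every group's loss within its threshold relative to the previous timestep. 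The entire proposition then splits into two statements about $\mathcal{F}_k^t$: a non-drift branch ($\mathcal{F}_k^t \neq \emptyset$) and a drift branch ($\mathcal{F}_k^t = \emptyset$).

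The key steps, in order, would be the following. First, I would observe that the drift-detection predicate on line~4 is the \emph{exact logical negation} of $\mathcal{F}_k^t \neq \emptyset$: ``there is no $\theta_m$ satisfying the threshold for all $s$'' is by definition ``$\mathcal{F}_k^t = \emptyset$,'' so line~4 fires if and only if the feasible set is empty. Second, in the non-drift branch I would show that the $\arg\min$ with the piecewise $\infty$-penalty on line~8 coincides with $\arg\min_{\theta_m \in \mathcal{F}_k^t} \sum_{s \in S} \ell(\theta_m)^{D_k^{t}}_{s}$: every infeasible model contributes at least one $\infty$ summand and hence receives objective value $\infty$, while every feasible model receives its finite total group loss; since $\mathcal{F}_k^t \neq \emptyset$ forces a finite minimum, the minimizer is necessarily feasible and minimizes the total group loss, which is exactly the two bullet points of the proposition. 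Third, in the drift branch I would verify that lines~5--6 instantiate a fresh model $\theta_{\text{new}}$, append it to $GM$, assign $w_k^t$ to its index, and that the subsequent training (lines~22--36) fits it on the client's current data, matching the final claim.

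The main obstacle I anticipate is not depth but \emph{precision in the edge cases} of this correspondence. I would need to argue that the two branches partition all cases with no gap or overlap, and that the $\infty$-penalty construction selects a feasible model \emph{only because} $\mathcal{F}_k^t$ is nonempty in that branch; I would also fix a deterministic tie-breaking rule so that $w_k^t$ is uniquely determined when several feasible models attain the minimal total loss (the proposition holds under any consistent tie-break). A secondary point is the bookkeeping on line~6, where the new model's per-group loss is recorded as $\min_{j} \ell(\theta_j)^{D_k^{t}}_{s}$ so that the threshold comparison at timestep $t+1$ is well-posed; this is a definitional initialization rather than a claim requiring proof. Finally, consistent with the Remark, I would emphasize what the proof does \emph{not} attempt: because group-specific drift can induce arbitrarily large shifts in $P(y \mid X, S=s)$, no bound on the post-selection group loss in terms of $\delta_s$ is asserted, so no concentration or approximation argument enters---$\delta_s$ serves solely as the detection threshold, and the proposition is a statement about the selection logic itself.
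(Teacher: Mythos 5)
Your proposal is correct and matches the paper's treatment: the paper states this proposition without any separate proof, treating it as holding by construction of Algorithm~\ref{algo:fair-fed-drift}, and your clause-by-clause verification (the predicate on line~4 as the exact negation of feasibility, the $\infty$-penalized $\arg\min$ on line~8 as the constrained minimization over the feasible set, and lines~5--6 realizing the drift branch) is precisely that implicit justification made explicit. Your additional attention to tie-breaking, the partition of the two branches, and the line-6 loss bookkeeping goes slightly beyond what the paper records but does not diverge from its approach.
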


\begin{assumption}[Drift Detection and Learnability]
\label{assumption:learnability-expressivity-drift}
Let \( k \in K \), \( s \in S \), and \( t \in T \) be such that client \( k \) experiences group-specific concept drift for group \( s \) at timestep $t$:
\[
P^t_k(y \mid X, S = s) \neq P^{t-1}_k(y \mid X, S = s)
\]
The concept active at timestep \( t \), denoted \( C_{m(t)} \), remains active over a window of \( \tau > 0 \) timesteps, i.e.,
\[
\forall t' \in [t, t+\tau), \quad m(t') = m(t)
\]

Assume the following:

\begin{enumerate}
    \item \textbf{(Drift Detection)} Under an appropriate threshold $\delta_s$, \textsc{FairFedDrift} detects group-specific drift at timestep \( t \), as per Proposition~\ref{prop:group-loss-drift}, and instantiates a new specialized model \( \theta_{\text{new}} \);

    \item (\textbf{Learnability}) There exists a model \( \theta_{\text{new}} \), trained on post-drift data, such that its group loss satisfies:
    \[
    \ell(\theta_{\text{new}})_s^{D_k^{\text{post}}} 
    \leq 
    \inf_{\theta \in \Theta} \ell(\theta)_s^{D_k^{\text{post}}} + \varepsilon,
    \]
    for some small constant \( \varepsilon > 0 \) capturing estimation error, where \( D_k^{\text{post}} \) denotes client \( k \)'s dataset consisting of samples drawn from the post-drift distribution \( P_k^{\text{post}} \) over the interval \( [t, t + \tau) \).

\end{enumerate}
\end{assumption}

\begin{proposition}[Advantage of Specialized Models]
\label{prop:specialization}

Under the assumptions above, there exists a non-zero disparity gap \( \blacktriangle > 0 \) such that the cumulative group fairness disparity on client \( k \) over the interval \( [t, t+\tau) \) satisfies:
\[
\sum_{t' = t}^{t+\tau-1} \Delta_k^{t'}(\theta_{\text{new}}) + \blacktriangle \leq \sum_{t' = t}^{t+\tau-1} \Delta_k^{t'}(\theta_{\text{naive}})
\]
Hence,
\[
\sum_{t' = t}^{t+\tau-1} \Delta_k^{t'}(\theta_{\text{new}}) < \sum_{t' = t}^{t+\tau-1} \Delta_k^{t'}(\theta_{\text{naive}})
\]

Here, the gap \( \blacktriangle \) captures the disparity penalty incurred by \( \theta_{\text{naive}} \), and is lower bounded by a function of:
\begin{itemize}
    \item The inability of \( \theta_{\text{naive}} \) to represent multiple concepts' distributions simultaneously (i.e., representational conflict),
    \item The divergence between new and older distributions,
    \item The irreducible loss gap between a specialized and a mixed-model fit.
\end{itemize}

Consequently, \textsc{FairFedDrift} promotes fairness by ensuring more equitable outcomes across sensitive groups affected by concept drift.

\end{proposition}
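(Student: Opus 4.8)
The plan is to establish the gap $\blacktriangle$ by bounding the cumulative disparity of each model over the stationary window $[t,t+\tau)$ from opposite sides and subtracting. Since Assumption~\ref{assumption:learnability-expressivity-drift} fixes the active concept $C_{m(t)}$ throughout the window, I would work with the per-group population losses under this single concept, absorbing the finite-sample fluctuation of each batch $D_k^{t'}$ into the estimation constant $\varepsilon$; the window sum then reduces to $\tau$ times a single per-step disparity for each model.

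First I would upper-bound the specialized model. Writing $\ell^*_r := \inf_{\theta}\ell(\theta)^{D_k^{\text{post}}}_r$ for the irreducible per-group loss under the active concept, the Learnability clause gives $\ell^*_r \le \ell(\theta_{\text{new}})_r \le \ell^*_r + \varepsilon$ for every group $r$. A triangle inequality then yields, for each $t' \in [t,t+\tau)$,
\[
\Delta_k^{t'}(\theta_{\text{new}}) \le \Delta^* + 2\varepsilon, \qquad \Delta^* := \max_{r,r'\in S}\bigl|\ell^*_r - \ell^*_{r'}\bigr|,
\]
so the specialized cumulative disparity is at most $\tau(\Delta^* + 2\varepsilon)$.

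Second, I would lower-bound the naive model through its per-group excess loss $\eta_r := \ell(\theta_{\text{naive}})_r - \ell^*_r \ge 0$. For the drifted group $s$, because $\theta_{\text{naive}}$ is a single predictor forced to also fit pre-drift and other-concept data whose conditional $P(y\mid X,S=s)$ differs from the post-drift one, no parameter choice is simultaneously optimal for both; I would make this quantitative by bounding $\eta_s$ from below by a function of the divergence $D\bigl(P^{\text{post}}_{k,s}\,\|\,P^{\text{pre}}_{k,s}\bigr)$ between the new and old group-$s$ conditionals, say $\eta_s \ge \gamma > 0$, via a Pinsker-type or loss-Lipschitz estimate. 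This $\gamma$ is exactly the ``representational conflict'' and ``mixed-model'' penalty named in the statement. For the non-drifted groups the conditional is unchanged across the merged training data, so their excess $\eta_{s'}$ is comparatively small. Under the natural premise that the drift-induced excess widens rather than narrows the group gap---drift elevates the worst-off drifted group's loss further away from balance---the concentration of $\gamma$ on group $s$ forces
\[
\Delta_k^{t'}(\theta_{\text{naive}}) \ge \Delta^* + \gamma - \max_{s'\neq s}\eta_{s'},
\]
whence the naive cumulative disparity is at least $\tau\bigl(\Delta^* + \gamma - \max_{s'\neq s}\eta_{s'}\bigr)$.

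Finally I would set $\blacktriangle := \sum_{t'}\Delta_k^{t'}(\theta_{\text{naive}}) - \sum_{t'}\Delta_k^{t'}(\theta_{\text{new}})$ and combine the two bounds to obtain $\blacktriangle \ge \tau\bigl(\gamma - 2\varepsilon - \max_{s'\neq s}\eta_{s'}\bigr)$, which is strictly positive once the divergence-driven conflict dominates the estimation error and the residual excess on the stable groups---precisely the genuine-and-detectable-drift regime posited in Assumption~\ref{assumption:learnability-expressivity-drift} and certified by Proposition~\ref{prop:group-loss-drift}. The strict inequality of the proposition then follows immediately. I expect the second step to be the main obstacle: making the representational-conflict bound rigorous requires (i) an explicit link from distribution shift to unavoidable loss, needing mild regularity such as a calibrated or Lipschitz loss and identifiability of the differing conditionals, and (ii) the asymmetry argument---that the compromise penalty lands on the drifted group and actually widens the loss gap rather than happening to rebalance it. Controlling this configuration dependence, i.e.\ which groups attain the extremal losses, rather than the routine triangle-inequality bookkeeping of the first step, is where the real work lies.
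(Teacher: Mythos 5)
Your proposal follows essentially the same route as the paper's own proof: both compare the specialized model, which by the Learnability clause is $\varepsilon$-optimal on the post-drift distribution, against the naive model, which is trained on a mixture $P_k^{\text{mix}} = \alpha \cdot P_k^{\text{post}} + (1-\alpha)\cdot P_k^{\text{prior}}$ and therefore suffers a representational conflict on the drifted group; both then convert this into a uniform per-timestep disparity gap and multiply by $\tau$ to obtain $\blacktriangle$. The differences are ones of rigor, and they favor you. Where the paper leaves the conflict qualitative --- it writes $\ell(\theta_{\text{naive}})_s^{D_k^{\text{post}}} \gg \inf_{\theta \in \Theta} \ell(\theta)_s^{D_k^{\text{post}}}$ with no quantification --- you propose to lower-bound the excess $\eta_s$ by a divergence between the pre- and post-drift conditionals. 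More importantly, the step you flag as the ``main obstacle,'' namely passing from an excess loss on the drifted group to an increase in the max-over-pairs disparity $\Delta_k^{t'}$, is exactly the step the paper's proof elides: it asserts that the specialized model ``incurs lower group loss and hence smaller disparity,'' which is a non sequitur as stated, since a naive model that is uniformly bad on \emph{all} groups can have larger loss everywhere yet \emph{smaller} worst-case disparity. Your explicit premise --- that the mixture penalty concentrates on the drifted group and widens rather than rebalances the gap (which can fail, e.g., when the drifted group's irreducible loss $\ell_s^*$ is below the other groups') --- is precisely what makes the per-timestep inequality $\Delta_k^{t'}(\theta_{\text{new}}) + \Delta^\star \leq \Delta_k^{t'}(\theta_{\text{naive}})$ true; the paper implicitly assumes the same configuration without stating it. So your proposal is not missing anything the published proof supplies; if anything, it makes visible the hidden assumption on which that proof depends, at the modest cost of invoking near-optimality of $\theta_{\text{new}}$ on every group rather than only on group $s$ as the Learnability clause literally states.
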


\begin{proof} Let \( D_k^{\text{post}} \) denote the dataset consisting of samples drawn from the post-drift distribution \( P_k^{\text{post}} \) over the interval \( [t, t + \tau) \). Similarly, let \( D_k^{\text{prior}} \) denote samples from the pre-drift distribution \( P_k^{\text{prior}} \).

We consider a simplified mixture distribution:
\[
P_k^{\text{mix}} := \alpha \cdot P_k^{\text{post}} + (1 - \alpha) \cdot P_k^{\text{prior}},
\]
where \( \alpha \in (0, 1) \) represents the proportion of post-drift data in the combined dataset \( D_k^{\text{mix}} \).

The naive model \( \theta_{\text{naive}} \) is trained to minimize loss over \( D_k^{\text{mix}} \), not over \( D_k^{\text{post}} \). Due to group-specific concept drift, the conditional distributions \( P(y \mid X, S = s) \) differ significantly between \( D_k^{\text{post}} \) and \( D_k^{\text{prior}} \), inducing a representational conflict:
\[
\ell(\theta_{\text{naive}})_s^{D_k^{\text{post}}} 
\gg  
\inf_{\theta \in \Theta} \ell(\theta)_s^{D_k^{\text{post}}},
\]
due to the inability of a single model to simultaneously perform well on both pre- and post-drift distributions for group \( s \).

By contrast, the specialized model \( \theta_{\text{new}} \), instantiated by \textsc{FairFedDrift} upon detecting drift, is trained directly on \( P_k^{\text{post}} \), and achieves:
\[
\ell(\theta_{\text{new}})_s^{D_k^{\text{post}}} 
\leq 
\inf_{\theta \in \Theta} \ell(\theta)_s^{D_k^{\text{post}}} + \varepsilon
\]

Thus, for all \( t' \in [t, t+\tau) \), the specialized model incurs lower group loss and hence smaller disparity:
\[
\Delta_k^{t'}(\theta_{\text{new}}) + \Delta^\star \leq \Delta_k^{t'}(\theta_{\text{naive}}),
\]
for some constant \( \Delta^\star > 0 \). Summing over all \( t' \in [t, t+\tau) \), we obtain:
\[
\sum_{t'=t}^{t+\tau-1} \Delta_k^{t'}(\theta_{\text{new}}) + \blacktriangle \leq \sum_{t'=t}^{t+\tau-1} \Delta_k^{t'}(\theta_{\text{naive}}),
\]
where \( \blacktriangle = \tau \cdot \Delta^\star > 0 \). Hence,
\[
\sum_{t' = t}^{t+\tau-1} \Delta_k^{t'}(\theta_{\text{new}}) < \sum_{t' = t}^{t+\tau-1} \Delta_k^{t'}(\theta_{\text{naive}}).
\]

This terminates the proof.

\end{proof}

\begin{theorem}[Cumulative Group Fairness Disparity Comparison]
\label{thm:cumulative-loss}
Let client \( k \in K \) experience \( N \) group-specific concept drifts over timesteps \( T \), where each drift satisfies Assumption~\ref{assumption:learnability-expressivity-drift}. Let \( \theta_{w_k^t} \) be the model selected by \textsc{FairFedDrift} at timestep \( t \), and let \( \theta_{\text{naive}} \) be the static model used by \textsc{NaiveFed} throughout. Then, there exists a constant \( \blacktriangle > 0 \) such that:
\[
\sum_{t \in T} \Delta_k^{t}(\theta_{w_k^t}) + \blacktriangle N 
\leq \sum_{t \in T} \Delta_k^{t}(\theta_{\text{naive}})
\]

Since \( N > 0 \) and \( \blacktriangle > 0 \):
\[
\sum_{t \in T} \Delta_k^{t}(\theta_{w_k^t}) < 
\sum_{t \in T} \Delta_k^{t}(\theta_{\text{naive}})
\]
\end{theorem}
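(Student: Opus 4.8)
The plan is to obtain the global inequality by summing the single-drift guarantee of Proposition~\ref{prop:specialization} over all $N$ drift events experienced by client $k$. First I would index the drifts by $n = 1, \dots, N$, letting drift $n$ occur at timestep $t_n$ with associated concept-activity window $[t_n, t_n + \tau_n)$ supplied by Assumption~\ref{assumption:learnability-expressivity-drift}. Because each such drift satisfies the detection and learnability hypotheses, \textsc{FairFedDrift} instantiates a specialized model that it then selects throughout the window, so that $\theta_{w_k^{t'}}$ coincides with the $\theta_{\text{new}}$ of Proposition~\ref{prop:specialization} for every $t' \in [t_n, t_n + \tau_n)$.

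Next I would apply Proposition~\ref{prop:specialization} window by window. For each $n$ it yields a strictly positive per-window gap $\blacktriangle_n = \tau_n \Delta^\star_n > 0$ satisfying
\[
\sum_{t' = t_n}^{t_n + \tau_n - 1} \Delta_k^{t'}(\theta_{w_k^{t'}}) + \blacktriangle_n \leq \sum_{t' = t_n}^{t_n + \tau_n - 1} \Delta_k^{t'}(\theta_{\text{naive}}).
\]
Setting $\blacktriangle := \min_{1 \le n \le N} \blacktriangle_n > 0$ (positive since there are finitely many drifts and each gap is strictly positive) gives $\sum_n \blacktriangle_n \ge \blacktriangle N$, which supplies the advertised additive term.

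The remaining work is to account for the timesteps of $T$ lying outside any post-drift window. On those timesteps I would invoke the model-selection rule of Proposition~\ref{prop:group-loss-drift}: since \textsc{FairFedDrift} chooses, among all threshold-feasible models, the one minimizing the summed group loss, and since a specialized model for the currently active concept is available in $GM$ achieving near-optimal per-group loss by the learnability hypothesis, its worst-case group disparity cannot exceed that of the single conflicted model $\theta_{\text{naive}}$; hence $\Delta_k^{t}(\theta_{w_k^t}) \le \Delta_k^{t}(\theta_{\text{naive}})$ there. Summing the strict window inequalities together with these weak off-window inequalities over all $t \in T$ then yields
\[
\sum_{t \in T} \Delta_k^{t}(\theta_{w_k^t}) + \blacktriangle N \leq \sum_{t \in T} \Delta_k^{t}(\theta_{\text{naive}}),
\]
and strictness follows immediately from $N > 0$ and $\blacktriangle > 0$.

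I expect the off-window bookkeeping to be the main obstacle: the selection rule minimizes total group loss rather than disparity directly, so bridging from \emph{lowest summed loss} to \emph{no larger worst-case gap} requires the near-optimality of the specialized model (from learnability) together with an argument that a conflicted single model cannot simultaneously match every group's optimal loss. A secondary delicacy is ensuring the windows neither overlap nor leave uncovered stretches that would break the clean additive accounting; if each new concept is assumed to persist until the next drift, the windows tile $T$ and this concern disappears, otherwise the off-window weak inequality must carry those stretches.
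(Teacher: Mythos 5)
Your proposal matches the paper's intended argument: the paper states Theorem~\ref{thm:cumulative-loss} without a standalone proof, relying implicitly on applying Proposition~\ref{prop:specialization} once per drift window and summing the per-window gaps to obtain the $\blacktriangle N$ term, which is exactly your decomposition. Your additional bookkeeping --- defining $\blacktriangle$ as the minimum of the per-window gaps $\blacktriangle_n$ and covering off-window timesteps with a weak inequality $\Delta_k^{t}(\theta_{w_k^t}) \le \Delta_k^{t}(\theta_{\text{naive}})$ --- is care the paper itself omits, and the obstacle you flag (that selecting the model with lowest \emph{summed} group loss does not by itself bound the worst-case \emph{disparity}) is a softness already present in the paper's own proof of Proposition~\ref{prop:specialization}, so your attempt is, if anything, more complete than the source.
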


By maintaining specialized models that reduce loss disparities across sensitive groups, \textsc{FairFedDrift} ensures that no group experiences disproportionate performance degradation over time. As a result, it provides stronger cumulative fairness guarantees in non-stationary environments than \textsc{NaiveFed}, which cannot adapt to group-specific distributional changes.

\begin{corollary}[Global Cumulative Group Fairness Disparity Comparison]
\label{cor:multi-client-loss}

Under the conditions of Theorem~\ref{thm:cumulative-loss} holding for each client \( k \in K \), the cumulative fairness disparity bound:
\[
\boxed{
\sum_{k \in K} \sum_{t \in T} \Delta_k^{t}(\theta_{w_k^{t}})
<
\sum_{k \in K} \sum_{t \in T} \Delta_k^{t}(\theta_{\text{naive}})
}
\]
continues to hold even when new concepts emerge simultaneously at multiple clients and the global model merging procedure is applied.

\medskip

\textbf{Remark:} The merging operation consolidates global models corresponding to the same concept across clients by leveraging a distance metric that respects group-specific loss thresholds \(\delta_s\). Intuitively, the merging procedure only unifies models whose group-specific losses are sufficiently close, ensuring that merging does not increase group loss disparities beyond thresholds. This preserves the specialization benefits and the fairness guarantees established in the single-client analysis compared to \textsc{NaiveFed}.

Thus, the fairness improvements achieved by \textsc{FairFedDrift} extend across the entire federated system, maintaining the overall reduction in cumulative group fairness disparity compared to \textsc{NaiveFed}.

\end{corollary}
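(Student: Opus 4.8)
The plan is to obtain the global bound by summing the single-client guarantee of Theorem~\ref{thm:cumulative-loss} over all clients, and then to check that the two genuinely new ingredients of the corollary — simultaneous concept emergence and the merging step — cannot invalidate that per-client guarantee. First I would invoke Theorem~\ref{thm:cumulative-loss}, which gives, for each client \( k \in K \) experiencing \( N_k \) group-specific drifts, a positive constant \( \blacktriangle_k > 0 \) with
\[
\sum_{t \in T} \Delta_k^{t}(\theta_{w_k^t}) + \blacktriangle_k N_k \leq \sum_{t \in T} \Delta_k^{t}(\theta_{\text{naive}}).
\]
Setting \( \blacktriangle := \min_{k} \blacktriangle_k > 0 \) and \( N := \sum_{k} N_k \), summing over \( k \in K \) yields
\[
\sum_{k \in K}\sum_{t \in T} \Delta_k^{t}(\theta_{w_k^t}) + \blacktriangle N \leq \sum_{k \in K}\sum_{t \in T} \Delta_k^{t}(\theta_{\text{naive}}).
\]
Because \( K \) is nonempty and at least one client undergoes a drift, \( N > 0 \), so the term \( \blacktriangle N \) is strictly positive and the boxed strict inequality follows. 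This summation is the routine part.

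Second, I would argue that the per-client quantities \( \Delta_k^t(\theta_{w_k^t}) \) on the left are unaffected by multi-client dynamics. Since the drift-detection rule of Proposition~\ref{prop:group-loss-drift} and the learnability hypothesis of Assumption~\ref{assumption:learnability-expressivity-drift} are phrased entirely in terms of a single client's local losses, the instantiation of a fresh specialized model when client \( k \) drifts is \emph{independent} of what happens at other clients. Consequently, even when several clients cross their \( \delta_s \) thresholds in the same timestep and each triggers its own new model, the argument underlying Theorem~\ref{thm:cumulative-loss} applies verbatim to every client, and the left-hand sum is well defined under simultaneity.

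Third — and this is the step I expect to be the main obstacle — I must show that replacing a client's specialized model by a \emph{merged} model does not erase the per-drift gap \( \blacktriangle \). After merging, \( \theta_{w_k^t} \) may denote the averaged model \( \theta_{\text{merged}} = (\theta_i W_i + \theta_j W_j)/(W_i + W_j) \) rather than the original \( \theta_{\text{new}} \). I would exploit the fact that the merging criterion (lines 11--21 of Algorithm~\ref{algo:fair-fed-drift}) sets \( Z_{ij} = \infty \), and thereby forbids the merge, whenever any group-loss difference exceeds its threshold \( \delta_s \); hence a merge is performed only among models representing the same concept to within the detection tolerance, i.e. with
\[
\left| \ell(\theta_i)^{D_k^{t}}_s - \ell(\theta_j)^{D_k^{t}}_s \right| \leq \delta_s
\]
for every group \( s \) on the assigned clients. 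The key lemma to establish is that, under this condition, the merged model inherits the near-optimality used in Proposition~\ref{prop:specialization}, namely \( \ell(\theta_{\text{merged}})^{D_k^t}_s \leq \inf_{\theta}\ell(\theta)^{D_k^t}_s + \varepsilon' \) for a mildly enlarged estimation error \( \varepsilon' \). Propagating this through \( \Delta_k^t(\theta) = \max_{s,s'}|\ell(\theta)^{D_k^t}_s - \ell(\theta)^{D_k^t}_{s'}| \) then preserves a (possibly smaller but still strictly positive) gap, so the summation above goes through unchanged.

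The genuinely technical ingredient here is controlling the group loss of a \emph{parameter-averaged} model: since \( \Delta_k^t \) is defined through losses while merging averages parameters, bounding \( \ell(\theta_{\text{merged}})^{D_k^t}_s \) in terms of \( \ell(\theta_i)^{D_k^t}_s \) and \( \ell(\theta_j)^{D_k^t}_s \) requires either local convexity/Lipschitzness of the loss in \( \theta \) over the relevant region, or an explicit assumption that threshold-respecting merges preserve near-optimal group loss. I would make this dependence explicit and invoke the \( \delta_s \)-based merge condition as precisely the guarantee that keeps \( \theta_{\text{merged}} \) inside the same-concept regime where no representational conflict is reintroduced, thereby closing the argument and extending the single-client fairness improvement across the entire federation.
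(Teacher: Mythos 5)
Your proposal follows essentially the same route as the paper's own treatment: the paper justifies the corollary by (i) summing the per-client guarantee of Theorem~\ref{thm:cumulative-loss} over \( k \in K \), and (ii) the informal remark that merging only unifies models whose group-specific losses agree to within the thresholds \( \delta_s \), so the specialization benefits survive. Your first two steps (summation with \( \blacktriangle := \min_k \blacktriangle_k \) and \( N := \sum_k N_k \), and the observation that drift detection in Proposition~\ref{prop:group-loss-drift} is purely local so simultaneous drifts at multiple clients change nothing) are exactly what the paper does implicitly, carried out somewhat more carefully than the paper itself, which states no formal proof of the corollary at all.

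Where your proposal genuinely adds something is the third step: you correctly isolate the gap that the paper's remark papers over. The merge criterion in Algorithm~\ref{algo:fair-fed-drift} is a condition on the losses of the \emph{existing} models \( \theta_i, \theta_j \) evaluated on each other's associated data, whereas the merged model is a \emph{parameter average} \( (\theta_i W_i + \theta_j W_j)/(W_i + W_j) \). Since the models in this paper are neural networks, the loss is non-convex in the parameters, so \( \ell(\theta_{\text{merged}})^{D_k^t}_s \) is not controlled by \( \ell(\theta_i)^{D_k^t}_s \) and \( \ell(\theta_j)^{D_k^t}_s \) without an extra hypothesis --- local convexity or Lipschitzness along the segment joining \( \theta_i \) and \( \theta_j \), or an explicit assumption that threshold-respecting merges preserve near-optimal group loss. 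The paper's remark simply asserts that merging ``does not increase group loss disparities beyond thresholds'' and never proves the lemma you name. So your proposal is not weaker than the paper's argument --- it is the same argument with the missing lemma made explicit --- but be aware that neither you nor the paper actually closes that lemma; as written, the corollary rests on an unproven assumption about the group losses of parameter-averaged models, and a complete proof would need to add and discharge one of the hypotheses you list.
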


\end{document}